\useunder{\uline}{\ul}{}
\title{SKI to go Faster: Accelerating Toeplitz Neural Networks via Asymmetric Kernels}
\author{%
  Alexander Moreno\thanks{All authors contributed equally. Alexander and Jonathan flipped for ordering. The vast majority of this work was done while the authors were at Luminous Computing.}\quad Jonathan Mei$^*$\quad Luke Walters$^*$\\
  \texttt{\{alexander.f.moreno,jonathanmei,lukecwalters\}@gmail.com}
}
\begin{document}
\doparttoc 
\faketableofcontents 

\part{} 

\maketitle

\begin{abstract}
Toeplitz Neural Networks (TNNs) \cite{qin2023toeplitz} are a recent impressive sequence model requiring $O(n\log n)$ computational complexity and $O(n)$ relative positional encoder (RPE) multi-layer perceptron (MLP) and decay bias calls. We aim to reduce both. We first note that the RPE is a non symmetric positive definite kernel and the Toeplitz matrices are pseudo-Gram matrices. Further 1) the learned kernels display spiky behavior near the main diagonals with otherwise smooth behavior; 2) the RPE MLP is slow. For bidirectional models, this motivates a sparse plus low-rank Toeplitz matrix decomposition. For the sparse component's action, we do a small 1D convolution. For the low rank component, we replace the RPE MLP with linear interpolation and use Structured Kernel Interpolation (SKI) \cite{wilson2015kernel} for $O(n)$ complexity. For causal models, ``fast'' causal masking \cite{katharopoulos2020transformers} negates SKI's benefits. Working in frequency domain, we avoid an explicit decay bias. To enforce causality, we represent the kernel via the real part of its frequency response using the RPE and compute the imaginary part via a Hilbert transform. This maintains $O(n \log n)$ complexity but achieves an absolute speedup. Modeling the frequency response directly is also competitive for bidirectional training, using one fewer FFT.
  We improve on speed and sometimes score on the Long Range Arena (LRA) 
 \cite{taylong}.
  \ifx with long sequences in mind\fi
\end{abstract}

\begin{figure}[h!]
    \centering
    \begin{subfigure}[t]{0.45\linewidth}
        \centering
        \includegraphics[width=0.9\linewidth]{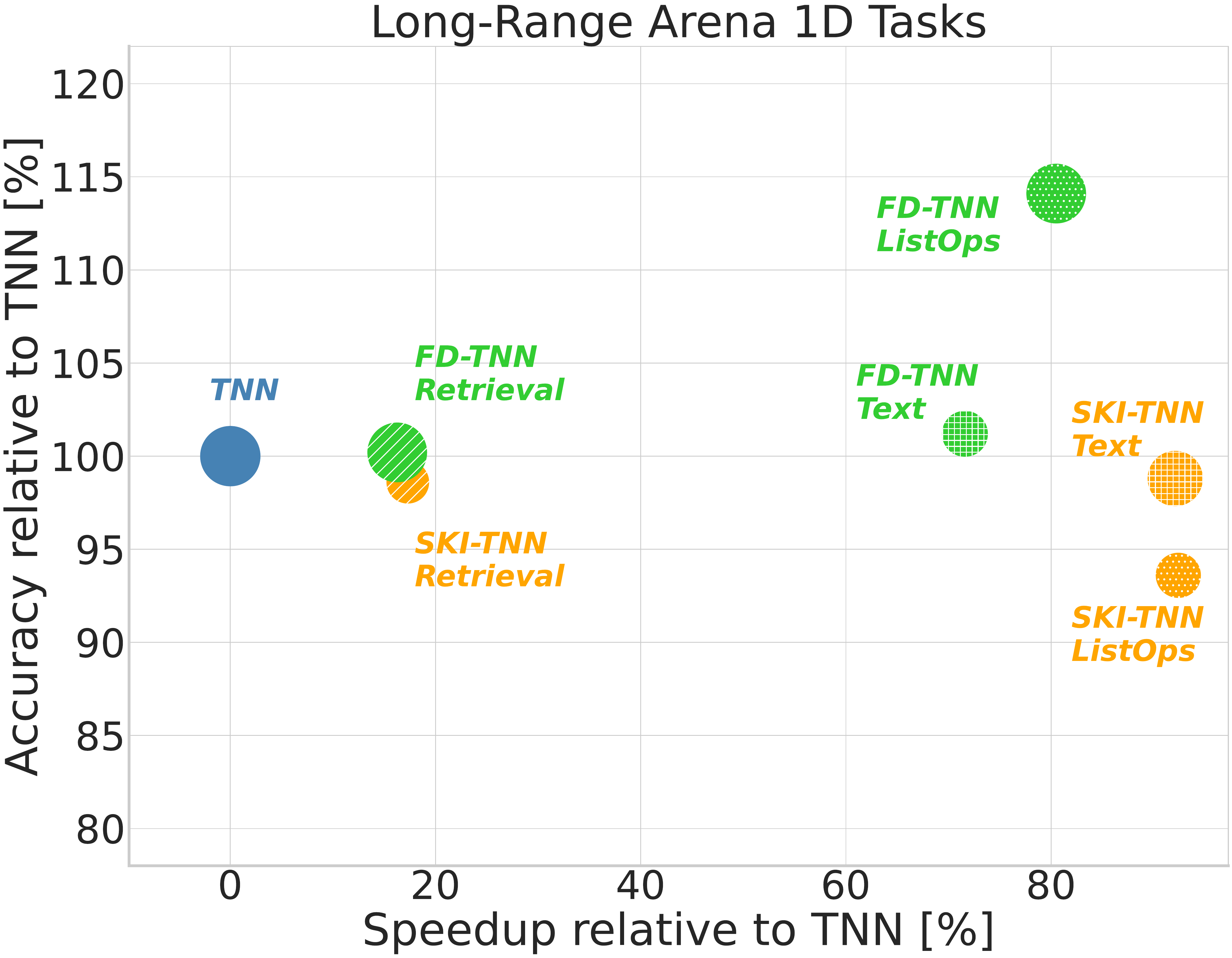}
        \caption{Long Range Arena (LRA).\label{fig:lra_bubble}}
    \end{subfigure}%
    \begin{subfigure}[t]{0.45\linewidth}
        \centering
    \includegraphics[width=0.9\linewidth]{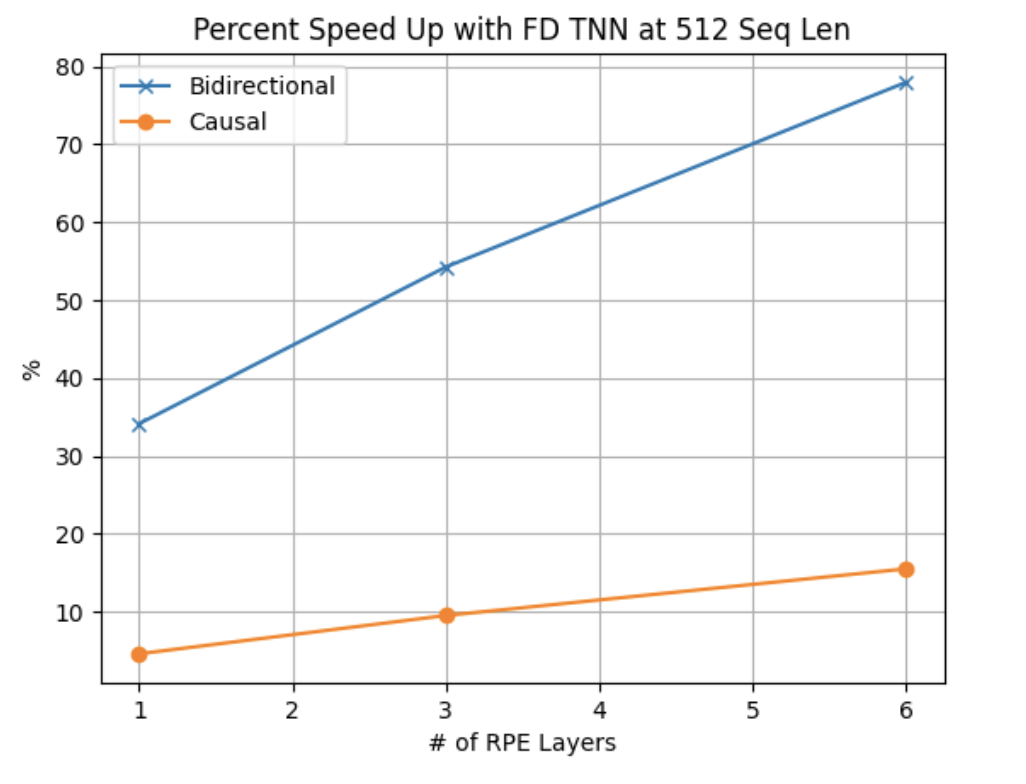}
    \caption{Pre-training speedups using Fourier Domain.\label{fig:extrapolation}}
    \end{subfigure}

    \caption{(a) In LRA, our approaches, SKI and FD-TNN are faster than TNNs for 1d tasks with strong LRA scores. Bubble sizes denote training model memory. (b) Our approach, FD-TNN, achieves substantial speed ups in iterations/sec for pre-training both causal and bidirectional models. Note that we do not include SKI-TNN in this plot as it does not use an MLP based RPE.}
    \label{fig:lra_bubble_and_extrapolation}
\end{figure}

        \ifx
        \begin{figure}[h!]
            \centering
            \begin{subfigure}[t]{\linewidth}
                \centering
                \includegraphics[width=0.45\linewidth]{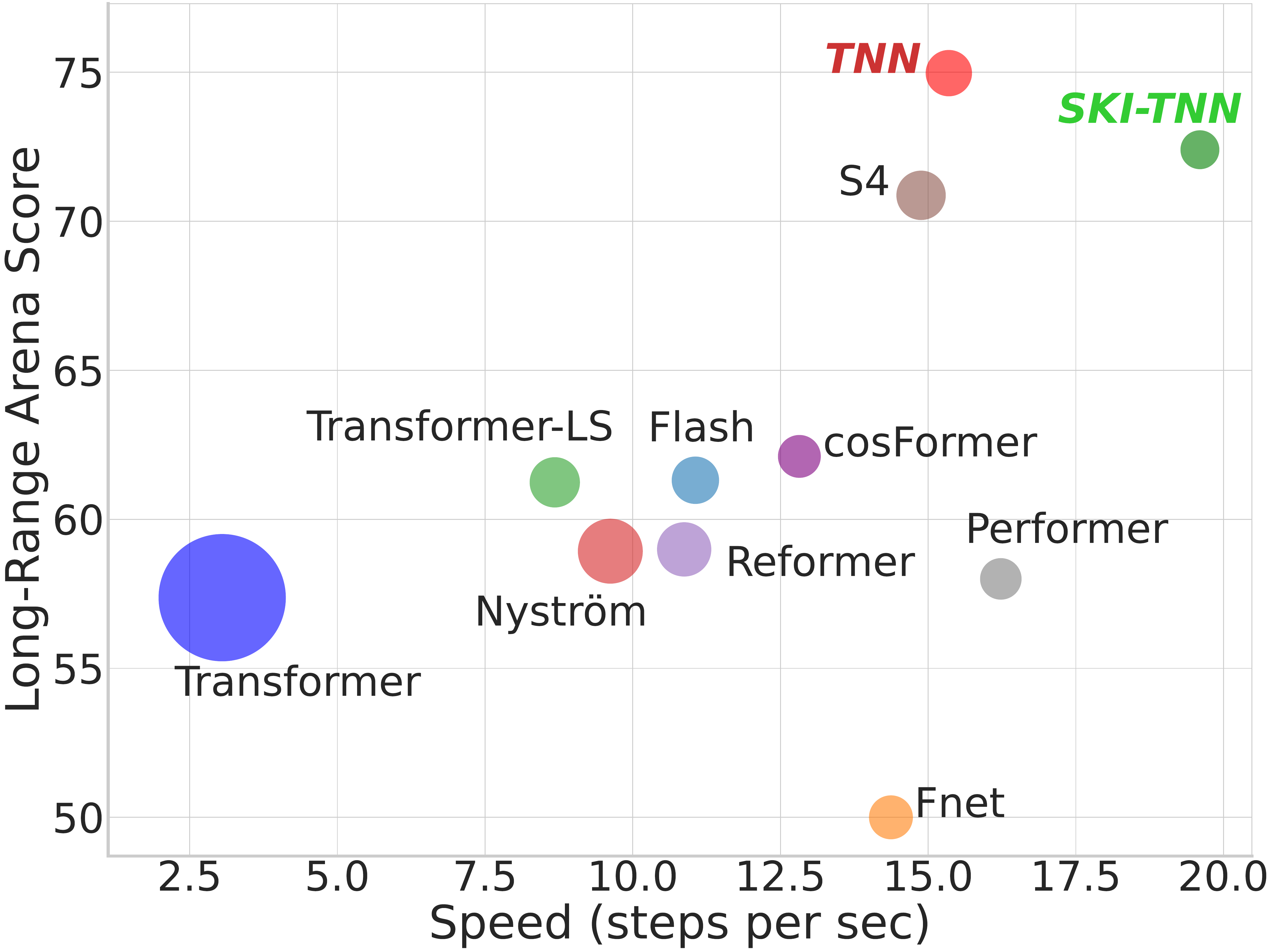}
            \end{subfigure}
            \caption{Long Range Arena (LRA). Our approach, SKI-TNN is faster than all previous approaches while achieving a nearly state of the art LRA score. The size of the bubble corresponds to the memory consumed by the model during training.}
            \label{fig:lra_bubble}
        \end{figure}

        \begin{figure}[h!]
            \centering
            \begin{subfigure}[t]{\linewidth}
                \centering
                \includegraphics[width=0.7\linewidth]{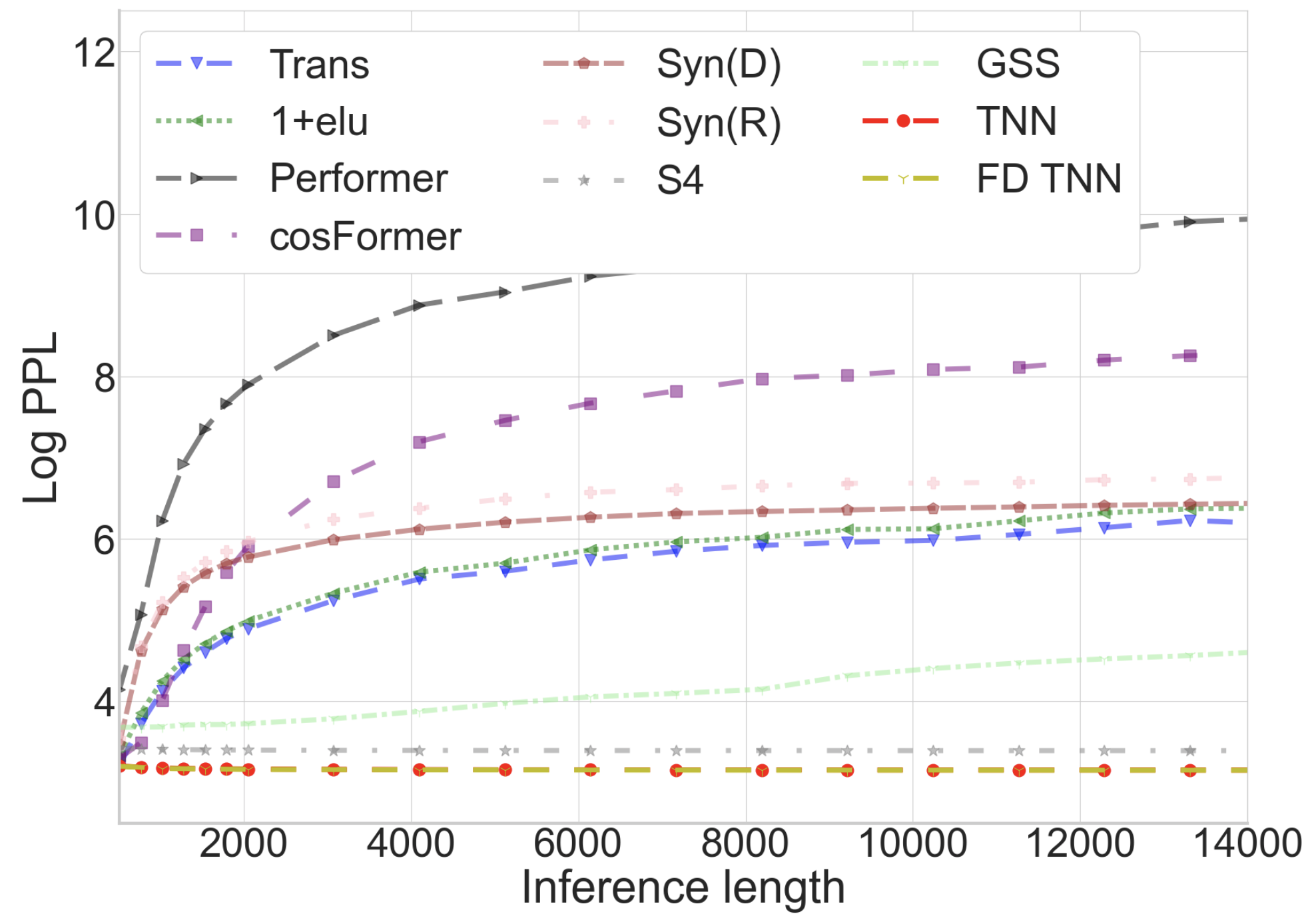}
            \end{subfigure}
            \caption{Wikitext-103 Causal Pretraining. Our approach, FD TNN achieves equivalent perplexity vs inference length to TNN, with ~11-16 percent training speed up on an A100.}
            \label{fig:extrapolation}
        \end{figure}
        \fi

\section{Introduction}
\label{sec:intro}
Sequence modeling is important in natural language processing, where sentences are represented as a sequence of tokens\ifx, and a token's representation can influence and be influenced by surrounding tokens\fi. Successful sequence modeling typically involves token and channel mixing. Token mixing combines representations of different sequence parts, while channel mixing combines the information across different dimensions of embedding vectors used to encode tokens. Transformers \cite{vaswani2017attention} are arguably the most successful technique for sequence modeling, and variants including \cite{hoffmann2022an,clark2020electra} have achieved state of the art performance on natural language tasks. They use self-attention for token mixing and feedforward networks for channel mixing.

Recently, \cite{qin2023toeplitz} proposed Toeplitz Neural Networks (TNN) using Toeplitz matrices for token mixing\ifx, applying a different linear system per channel\fi. They use a learned neural similarity function, the Relative Positional Encoder (RPE), to form the Toeplitz matrices. Toeplitz matrix vector multiplication can be performed with sub-quadratic complexity using the Fast Fourier Transform (FFT), giving the TNN token mixing layer a total $O(dn\log n)$ computational complexity, where $d$ is the embedding dimension and $n$ is the sequence length. This achieved state of the art predictive performance and nearly state of the art speed for the long range arena (LRA) benchmark \cite{taylong}. They also showed strong performance pre-training wikitext-103 \cite{meritypointer} and on the GLUE benchmark\cite{wangglue}. Despite strong empirical speed performance, TNNs have two fundamental efficiency limitations: 1) super-linear computational complexity 2) many calls to the RPE: for each layer, one call per relative position.


In this paper, we interpret the RPE as a non-SPD kernel and note 1) the learned kernels are discontinuous near the main diagonals but otherwise smooth globally; 2) the ReLU RPE learns 1D piecewise linear functions: an MLP is slower than necessary. For bidirectional models, this motivates a sparse plus low-rank decomposition. We apply the sparse component's action via a small 1D convolution. For the low rank component, we replace the RPE MLP with linear interpolation at a set of inducing points and an asymmetric extension of Structured Kernel Interpolation (SKI) \cite{wilson2015kernel} for $O(n)$ complexity. Further, using an inverse time warp, we can extrapolate beyond sequence lengths observed during training. For causal models, even ``fast'' causal masking \cite{katharopoulos2020transformers} negates the speed and memory benefits from SKI. Thus, we instead represent the real part of the kernel's frequency response using the RPE MLP, and evaluate the RPE with finer frequency resolution to extrapolate to longer sequence lengths in the time domain. From the real part, we compute the imaginary part via a Hilbert transform during the forward pass to enforce causality.  In the bidirectional setting, we remove the causality constraint and represent the complex frequency response of the kernel with the RPE MLP. Levels of smoothness in frequency response imply decay rates in the time domain: thus we model the decay bias implicitly. This maintains $O(n \log n)$ complexity but achieves an absolute speedup. Further, it often leads to better predictive performance on LRA tasks.

This paper has three primary contributions: 1) a TNN sparse plus low rank decomposition, extending SKI to TNNs for the low rank part. We replace the RPE MLP with linear interpolation and apply inverse time warping to efficiently train bidirectional TNNs. We provide rigorous error analysis for our asymmetric SKI application; 2) alternatively, for both causal and bidirectional models, we work directly in the frequency domain and use the Hilbert transform to enforce causality in the autoregressive setting. We prove that different activation choices for an MLP modeling the discrete time Fourier transform (DTFT) lead to different decay rates in the original kernel. 3) Empirical results: we demonstrate that our approaches show dramatically improved computational efficiency, setting a new speed state of the art on LRA \cite{tay2022efficient} on the 1d tasks, with strong LRA score. In section \ref{sec:related} we describe related work. In section \ref{sec:modeling} we propose our new modeling approaches. In \ref{sec:theory} we state several theoretical results regarding our modeling approaches. In \ref{sec:experiments} we extend the empirical results of \cite{qin2023toeplitz}, showing our speed gains with minimal prediction deterioration. We conclude in section \ref{sec:conclusion}.

\section{Related}
\label{sec:related}
The most related papers use Toeplitz matrices for sequence modeling \cite{qin2023toeplitz,luo2021stable,poli2023hyena}. We build off of \cite{qin2023toeplitz} and introduce several techniques to improve on their speed results. \cite{luo2021stable} took a similar approach, but applied Toeplitz matrices to self-attention rather than departing from it. \cite{poli2023hyena} is also similar, using alternating Toeplitz and diagonal matrices as a replacement for self-attention within a Transformer. While we focus on the setting of \cite{qin2023toeplitz} as it was released first, our approach is applicable to \cite{poli2023hyena}.

Also related are kernel based xFormers, particularly those using the Nystr{\"o}m method \cite{nystrom1930praktische,baker1977numerical}. The most related work is \ifx Nystr{\"o}mformer \fi\cite{xiong2021nystromformer}, which adapts a matrix Nystr{\"o}m method for asymmetric matrices \cite{nemtsov2016matrix} to self-attention. We instead adapt this along with SKI \cite{wilson2015kernel} to Toeplitz matrices. \ifx Skyformer \fi\cite{chen2021skyformer} extends \ifx Nystr{\"o}mformer\fi\cite{xiong2021nystromformer} by embedding the self-attention matrix into a larger PSD kernel matrix and approximating the larger matrix instead. Their final approximate matrix has lower spectral error compared to \ifx Nystr{\"o}mformer \fi\cite{xiong2021nystromformer} and higher average validation accuracy on LRA \cite{taylong}. However, their method is slightly slower\ifx than Nystr{\"o}mformer\fi. Also somewhat related are random feature self-attention approximations\cite{peng2021random,choromanski2021rethinking}. These extend \cite{rahimi2007random}, but use different random features that better approximate self-attention than random Fourier or binning features.

Sparse transformers are also relevant. \cite{child2019generating} proposed using strided and fixed patterns. \cite{brown2020language} alternated between sparse locally banded and dense attention. Finally, \cite{zaheer2020big} proposed combining random attention, window attention and global attention. Our use of a short convolutional filter is most similar to window attention. The space of efficient transformers is huge and there are many models that we haven't covered that may be relevant. \cite{tay2022efficient} provides an excellent survey.

Other successful long sequence approaches include state space models \cite{guefficiently,smith2023simplified,dao2022hungry}, long convolution \cite{romerockconv,fu2023simple}, adding moving averages to gated attention \cite{ma2023mega} and more \cite{khalitov2023chordmixer}.

\section{Modeling Approach}
\label{sec:modeling}
We review Toeplitz neural networks (TNNs) in section \ref{sec:prelims}. We next speed up the TNN's Toeplitz neural operator (TNO). We discuss using Nystr{\"o}m and SKI approaches to bidirectional training in \ref{sec:bidirectional-training}. We discuss frequency based approaches, particularly for causal training in \ref{sec:freq-based-approaches}.

\subsection{Preliminaries: Toeplitz matrices and Toeplitz Neural Networks}\label{sec:prelims}

TNNs \cite{qin2023toeplitz} replace self-attention, which computes the action of self-attention matrices that encode the similarity between both observation values and absolute positions, with the action of Toeplitz matrices that encode similarity only based on \textit{relative} positions. Toeplitz matrices have, for each diagonal, the same entries from left to right. That is, $\T_{ij}=t_{i-j},\T\in \Rbb^{n\times n}$.
    Unlike self-attention matrices, which require $O(n^2)$ memory, a Toeplitz matrix has $2n-1$ unique elements and requires $O(n)$ memory. Due to close connections with discrete-time convolution, $\T\x$ can be computed in $O(n\log n)$ time by embedding $\T$ in a circulant matrix and applying FFT.

A TNN \cite{qin2023toeplitz} has multiple sequence modeling blocks, which we show in Figure \ref{fig:tnn} in Appendix \ref{sec:tnn_arch_diagram}. Each block has a Gated Toeplitz Unit (GTU), which does both token and channel mixing, followed by a Gated Linear Unit (GLU) \cite{shazeer2020glu}, which does channel mixing. The core of the GTU is the Toeplitz Neural Operator (TNO), which does token mixing and is the part of the architecture that we modify. 

\begin{figure}
    \centering
    \begin{subfigure}[t]{0.275\linewidth}
        \centering
        \includegraphics[scale=0.6]{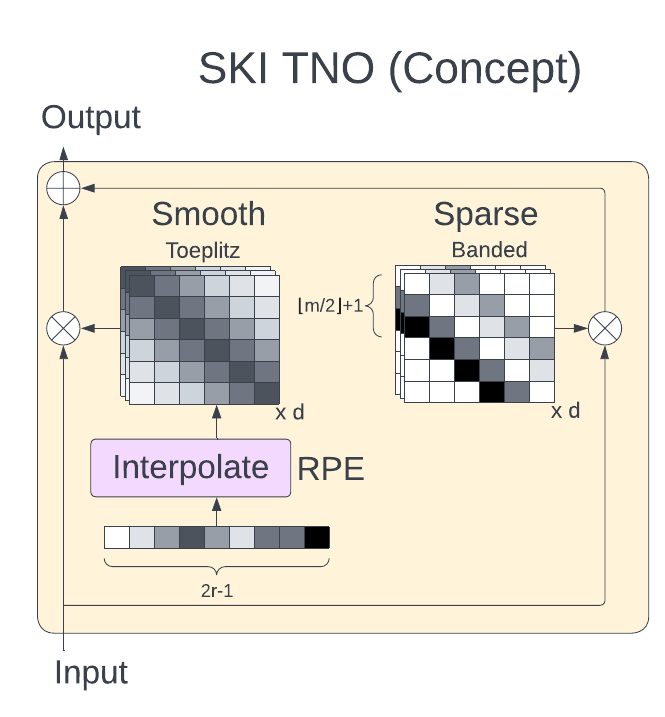}
        \caption{SKI-TNO.}
    \end{subfigure}~
       \begin{subfigure}[t]{0.275\linewidth}
        \centering
        \includegraphics[scale=0.6]{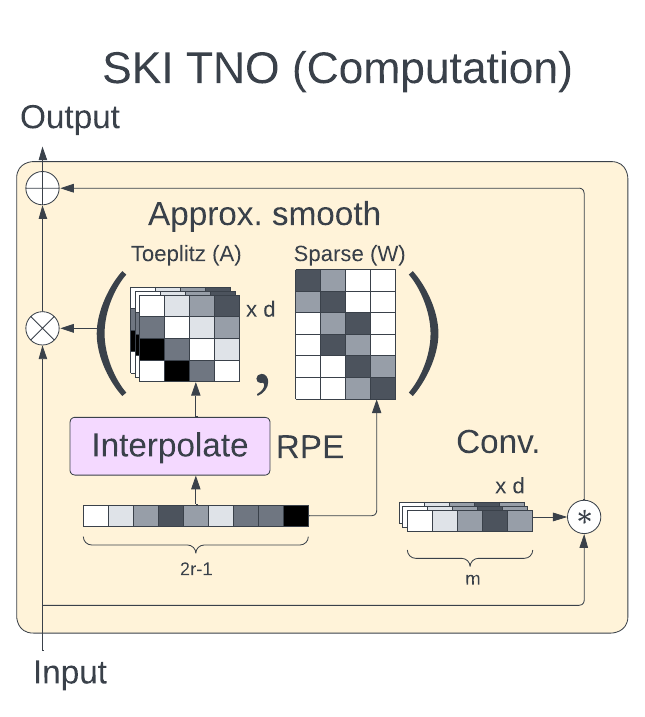}
        \caption{Fast implementation of SKI-TNO.}
    \end{subfigure}~
    \begin{subfigure}[t]{0.2\linewidth}
        \centering
        \includegraphics[scale=0.6]{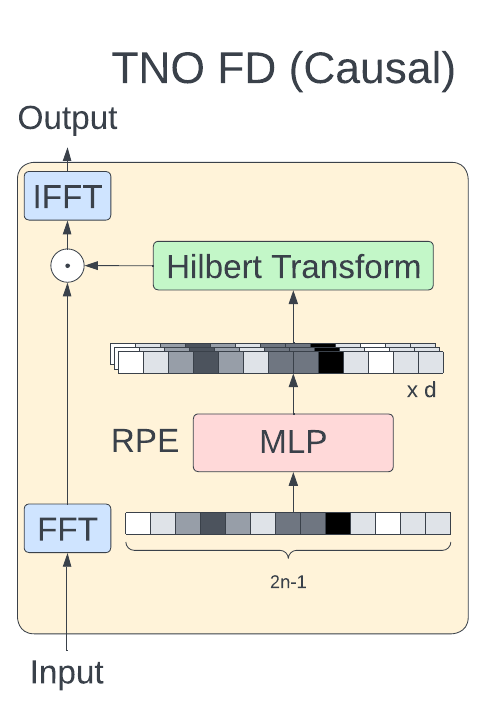}
        \caption{FD-TNO causal.}
    \end{subfigure}~
       \begin{subfigure}[t]{0.2\linewidth}
        \centering
        \includegraphics[scale=0.6]{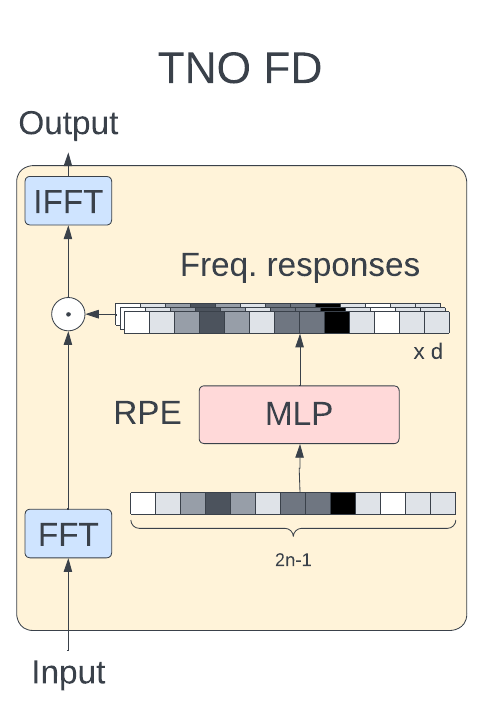}
        \caption{FD-TNO bidirectional.}
    \end{subfigure}%
    \caption{Our SKI-TNO and FD-TNO modifications: (a) We decompose Toeplitz matrices into sums of sparse + smooth components. Additionally, we use interpolation instead of an MLP to learn the RPE. (b) We use a 1D convolution to apply the sparse component and SKI as a low-rank approximation to the smooth component. (c) For the causal case, we use frequency domain RPE with a Hilbert Transform to enforce causality. (d) Our FD-TNO also is competitive in the bidirectional case, with one fewer FFT than TNO.}
    \label{fig:ski_tno}
\end{figure}

We now describe the TNO, shown in Figure \ref{fig:tnn:tno} of Appendix \ref{sec:tnn_arch_diagram}. Given a sequence $\X\in \Rbb^{n\times d}$ of length $n$ and dimension $d$ in discrete time, there are $2n-1$ unique relative positions/times $i-j$ for $i,j=1,\ldots,n$. An $\textrm{RPE}:\mathbb{Z}\rightarrow \Rbb^d$ neural network maps each relative position to a $d$-dimensional embedding. These embeddings are used to construct Toeplitz matrices $\T^l$ for $l=1,\ldots,d$ using
\begin{align*}
    \T^{l}_{ij}&=\lambda^{\vert i-j\vert}\textrm{RPE}_l(i-j).
\end{align*}
$\textrm{RPE}_l(i-j)$ is a learned similarity between positions for dimension $l$, while $\lambda^{\vert i-j\vert}$ with $\lambda\in (0,1)$ is an exponential decay bias penalizing far away tokens to be dissimilar. We can interpret $\T^{l}_{ij}$ as evaluating a stationary non-SPD kernel $k_l(i-j)=\lambda^{\vert i-j\vert}\textrm{RPE}_l(i-j)$. Thus $\T^l$ can be interpreted as a pseudo or generalized Gram matrix. Letting $\x^l$ be the $l$th column of $\X$, the TNO outputs
\begin{align*}
    \textrm{TNO}(\X)&=(\T^1 \x^1\ldots \T^d\x^d)\in \Rbb^{n\times d}
\end{align*}
where each $\T^l\x^l$ is computed via the FFT as described above.

The main costs are the RPE's MLP, the FFT, and the decay bias. We aim to eliminate the MLP and decay bias when possible. In the bidirectional setting, we use SKI to apply the FFT using a much smaller Toeplitz matrix. In a separate model we learn the RPE's frequency response directly. In the bidirectional setting, this allows us to both avoid explicitly modeling the decay bias and use one fewer FFT. In the causal setting, it allows us to avoid explicitly modeling the decay bias.

\subsection{SKI Based Approaches for Bidirectional Training}\label{sec:bidirectional-training}
For a given Toeplitz matrix $\T$, we assume it admits a decomposition that we can approximate with a sparse+low-rank representation, $\T=\T_{\textrm{sparse}}+\T_{\textrm{smooth}}\approx \T_{\textrm{sparse}}+\T_{\textrm{low}}$. Our bidirectional training thus consists of three primary components. The first, the sparse component $\T_{\textrm{sparse}}$ is straightforward. Applying the action $\T_{\textrm{sparse}}\x$ of $\T_{\textrm{sparse}}\in \Rbb^{n\times n}$ with $m$ non-zero diagonals is equivalent to applying a 1D convolution layer with filter size $m$. We then discuss our asymmetric SKI for $\T_{\textrm{low}}$ in section \ref{sec:asymmetric-nystroem}. Finally, we discuss how we handle sequence lengths not observed in training for $\T_{\textrm{low}}$ via an inverse time warp in section \ref{sec:inverse-time-warp}. Algorithm \ref{alg:asymmetric-ski-bidirectional} summarizes our TNO based on these techniques.

\begin{algorithm}
\caption{Sparse Plus Low Rank Bidirectional TNO with Asymmetric SKI}
\begin{algorithmic}
\State \textbf{Given} sequence $\X \in \Rbb^{n\times d}$ with columns $\x^l$
\State \textbf{Hyperparameters} rank $r\ll n$, sparse filter size $m$, interpolation degree $N$, decay parameter $\lambda$
\State \textbf{Compute} inducing points $p_1,\ldots,p_r$ evenly spaced on $[0,n]$
\For{$l=1,\ldots,d$}
    \State Compute $\T_{\textrm{sparse}}^l \x^l$ with a 1D convolutional filter, size $m$.
    \State Let $x(t)=\textrm{sign}(t)\lambda^{\vert t\vert}$.
    \State Form $\A^l\in \Rbb^{r\times r}$ with entries $\A_{ij}^l=k_l(p_i-p_j)=\textrm{RPE}_l(x(p_i-p_j))$
    \State Form $\W^l\in \Rbb^{n\times r}$ degree $N$ polynomial interpolation matrix
    \State Compute $\T_{\textrm{low}}^l\x^l$ with $\T_{\textrm{low}}^l=\W^l \A^l \W^{l\top }$
\EndFor
\State Return $\textrm{TNO}(\X)=(\T_{\textrm{sparse}}^1 \x^1+\T_{\textrm{low}}^1\x^1,\ldots,\T_{\textrm{sparse}}^d \x^d+\T_{\textrm{low}}^d\x^d)$
\end{algorithmic}
\label{alg:asymmetric-ski-bidirectional}
\end{algorithm}

\subsubsection{SKI For Asymmetric Nystr\"{o}m}\label{sec:asymmetric-nystroem}

Given an asymmetric stationary kernel $k:\mathbb{R}\times \mathbb{R}\rightarrow \mathbb{R}$, we wish to approximate the (pseudo) Gram matrix $\T\in\Rbb^{n\times n}$ using a low-rank approximation based on a smaller Gram matrix $\A\in\Rbb^{r\times r}$, with $r\ll n$. In context, $\A$ is formed using relative positions between a set of inducing points $p_1,\ldots,p_r$ instead of the full set $1,\ldots,n$ that is used for $\T$. That is,
\begin{align*}
    \T_{ij}=k(i-j) \qquad \textrm{and} \qquad \A_{ij}=k(p_i-p_j).
\end{align*}
In our case, the inducing points are uniformly spaced. Some submatrices of $\A$ may be submatrices of $\T$ (if inducing points are also observation points). To derive the Nystr\"{o}m approximation, we form an augmented Gram matrix $\K\in\Rbb^{(n+r) \times (n+r)}$ in block form as
\begin{align*}
    \K&=\begin{pmatrix}
         \A & \B \\
         \F & \T
    \end{pmatrix},
\end{align*}
where $\B\in\Rbb^{r \times n}$ and $\F\in\Rbb^{n \times r}$ are respectively the upper right and lower left partitions of the large Gram matrix $\K$. Explicitly,
\begin{align*}
    \B_{ij}=k(p_i-j) \qquad \textrm{and} \qquad \F_{ij}=k(i-p_j).
\end{align*}

Extending \cite{nemtsov2016matrix} to allow singular $\A$,
\begin{align*}
    \widehat{\K}&=\begin{pmatrix}
        \A  \\
        \F 
    \end{pmatrix}\A^{\dagger}\begin{pmatrix}
         \A & \B 
    \end{pmatrix}=\begin{pmatrix}
         \A & \A\A^{\dagger}\B \\
         \F\A^{\dagger}\A & \F \A^{\dagger} \B
    \end{pmatrix}
\end{align*}
where $\A^{\dagger}$ is the Moore-Penrose pseudo-inverse satisfying $\A\A^{\dagger}\A=\A$ (but not necessarily $\A\A^{\dagger}=\I$ as in \cite{nemtsov2016matrix}, which shows up in our different expressions for off-diagonal blocks of $\widehat{\K}$). Following structured kernel interpolation (SKI) \cite{wilson2015kernel}, we approximate $\F$ and $\B$ using interpolation.
Specifically,
\begin{align*}
    \F \approx \W \A \qquad \textrm{and} \qquad \B \approx \A \W^\top
\end{align*}
where $\W\in\Rbb^{n\times r}$ is a matrix of sparse interpolation weights with up to two non-zero entries per row for linear interpolation or up to four for cubic. These weights can be computed in closed form from the inducing points $p_i$ and the observation points $i$. Thus we have
\begin{align*}
    \T &\approx \F \A^{\dagger} \B \approx \W \A\A^{\dagger} \A\W^\top = \W\A\W^\top\\
    \Rightarrow \widetilde{\T}&=\W \A \W^\top
\end{align*}
as desired. We can set $\T_{\textrm{low}}=\tilde{\T}$ and compute $\tilde{\T}\x$ by first applying $\W^\top \x$, which is an $O(n)$ operation due to $\W\in \Rbb^{n\times r}$ having sparse rows. Next, we apply $\A(\W^\top \x)$. Since $\A$ is a Toeplitz matrix, this is $O(r\log r)$ as per Section \ref{sec:prelims}. Finally, $\W (\A \W^\top\x)$, the action of $\W$, is again an $O(n)$ operation. Thus computing $\tilde{\T}\x$ is $O(n+r\log r)$ computation. On a GPU, this factorization achieves a speedup from having small $r$ and being able to leverage efficient parallelized matrix multiplication on specialized hardware. However, in PyTorch \cite{paszke19pytorch}, we note that for medium sized matrices up to $n=512$, the time required for data movement in order to perform sparse-dense matrix multiplications can be higher than that of simply performing dense matrix multiplication. This means that in practice, we may instead choose to perform batched dense matrix multiplication, which yields an absolute speedup but a worse asymptotic complexity of $O(nr^2+r\log r)$.

\subsubsection{Inverse Time Warp}\label{sec:inverse-time-warp}

TNNs use $k_l(i-j)=\lambda^{\vert i-j\vert}\textrm{RPE}_l(i-j)$, where $\textrm{RPE}_l(i-j)$ is an MLP. There are two issues: 1) the sequential computations required for an MLP are slow, and we only need to evaluate at $2r-1$ points using SKI instead of $2n-1$ to produce the full matrix; 2) extrapolation is used in extending to longer sequence lengths than the MLP was trained on, which is generally less reliable than interpolation.

In Proposition \ref{prop:multiple-piecewise-linear}, we note that an MLP $f:\Rbb\rightarrow \Rbb^d$ with ReLU activations and layer normalization is $d$ piecewise linear functions. As we only need to evaluate at $2r-1$ points, we could let $\textrm{RPE}_l$ be a piecewise linear function with $r$ grid points. However, we still need to handle extrapolation. We use an inverse time warp and let $\textrm{RPE}_l$ linearly interpolate on $[-1,1]$ with the constraint $\textrm{RPE}_l(0)=0$ and define $x(t)=\textrm{sign}(t)\lambda^{\vert t\vert}$ for some $0<\lambda<1$. We then let $k_l(i-j)=\textrm{RPE}_l(x(i-j))$.



\subsection{Frequency Based Approaches}\label{sec:freq-based-approaches}
\subsubsection{Causal Training}\label{sec:causal-training}

The SKI approach allows training bidirectional TNNs with linear complexity. However, fast causal masking negates SKI's benefits (see Appendix 
 \ref{sec:causal-negates-ski}). Thus we need an alternate causal speedup. We use an MLP in the Fourier domain to avoid an explicit time domain decay bias, and use the Hilbert transform to enforce causality. We now describe how we can learn a causal kernel when working in frequency domain (FD). We first define the discrete Hilbert transform, the key tool for achieving this.


\begin{definition}
    The \textbf{discrete Hilbert transform} of the discrete Fourier transform $\hat{k}$ is given by
    \begin{align*}
        \mathcal{H}\{\hat{k}\}&=\hat{k}*h
    \end{align*}
    where $*$ denotes convolution and
    \begin{align*}
        h[l]&=\begin{cases}
        0\textrm{, $l$ even}\\
        \frac{2}{\pi l}\textrm{, $l$ odd}
        \end{cases}
    \end{align*}
\end{definition}

The real and imaginary parts of the Fourier transform of a causal function are related to each other through the Hilbert transform. Thus, in order to represent a causal signal, we can model only the real part and compute the corresponding imaginary part. That is, we first estimate an even real function $\hat{k}$ (symmetric about $0$) using an MLP. We then take $\hat{k}_{\textrm{causal}}(\omega) =\hat{k}(\omega)-i\Hcal\{\hat{k}\}(\omega)$.

The inverse Fourier transform $k_{\textrm{causal}}$ of $\hat{k}_{\textrm{causal}}$ will thus be causal. For a discussion of why this ensures causality, see \cite{oppenheim10discrete}. See Algorithm \ref{alg:causal-discrete-hilbert} for TNO pseudocode using this approach. Different choices for the smoothness of the frequency domain MLP will lead to different decay rates in time domain, so that smoothness in frequency domain essentially serves the same purpose as the decay bias in \cite{qin2023toeplitz}. We discuss this theoretically in Section \ref{sec:smoothness-fourier-decay-time}. Note that we also find that working directly in the frequency domain for bidirectional models (without the Hilbert transform) is often competitive with SKI for speed (despite being $O(n\log n)$ instead of $O(n+r\log r)$) due to needing one fewer FFT.

\begin{algorithm}
\caption{Causal TNO via Discrete Hilbert Transform}
\begin{algorithmic}
\State \textbf{Given} sequence $\X \in \Rbb^{n\times d}$ with columns $\x^l$
\State \textbf{Hyperparameters} activation function
\For{$l=1,\ldots,d$}
    \State $\hat{\x}^l \gets \mathcal{F}\{\x^l\}$, where $\mathcal{F}$ is the rFFT.
    \State Compute even real function $\hat{k}^l=\textrm{RPE}_l(\omega)$, $\omega=\frac{m\pi}{n},m=0,\ldots,n$.
    \State Take discrete Hilbert transform $\Hcal\{\hat{k}^l\}$ via the rFFT and irFFT.
    \State Compute $\hat{k}_{\textrm{causal}}^l(\omega)=\hat{k}^l(\omega)-i\Hcal\{\hat{k}^l\}(\omega)$ for $\omega=\frac{m\pi}{n},m=0,\ldots,n$.
    \State $\y^l \gets \mathcal{F}^{-1}\{\hat{k}_{\textrm{causal}}^l \odot \hat{\x}^l\}$, where $\mathcal{F}^{-1}$ is the irFFT and $\odot$ denotes an element-wise product.
\EndFor
\State Return $\textrm{TNO}(\X)=(\y^1, \ldots,\y^d)$
\end{algorithmic}
\label{alg:causal-discrete-hilbert}
\end{algorithm}

\subsubsection{Bidirectional Training with FD TNN}\label{sec:blm-training-fdtnn}

We extend the FD approach to bidirectional training by removing the causality constraint and model the complex frequency response of real valued time domain kernels directly.  To do so we simply double the output width of the RPE and allocate each half for the real and imaginary parts of the kernel frequency responses, while explicitly forcing real-valued responses at  $\omega=0$ and $\pi$.   While increasing the complexity of the RPE slightly, we achieve the speed ups
in Figure \ref{fig:lra_bubble_and_extrapolation} by eliminating the FFTs for the kernels and causality constraint, in addition to the decay bias.

\section{Theory}
\label{sec:theory}
We show in Proposition \ref{prop:multiple-piecewise-linear} that an MLP mapping from scalars with layer norm and ReLU activations is piecewise linear and continuous, suggesting that using an MLP that we only need to evaluate at a small number of points may be overparametrized, justifying the use of interpolated piecewise linear functions. In section \ref{sec:matrix-error-bound} we analyze the spectral norm of the matrix approximation error for SKI. We assume the sparse component is exactly identifiable and bound the error of approximating the smooth term via a low-rank SKI factorization. We leave the problem of relaxing this assumption to future work. In section \ref{sec:smoothness-fourier-decay-time}, we analyze how by using different activations with different smoothness when learning the DTFT of the kernel, we obtain corresponding decay rates for the time domain signal.

\begin{restatable}{proposition}{multiplepiecewiselinear}\label{prop:multiple-piecewise-linear}
A ReLU MLP $f:\Rbb\rightarrow\Rbb^d$ with layer norm and no activation on its output is $d$ piecewise linear continuous functions.
\end{restatable}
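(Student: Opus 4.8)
The plan is to proceed by induction on the number of layers, tracking the invariant that the pre-activation vector at each layer is, coordinatewise, a continuous piecewise linear function of the scalar input $t \in \Rbb$. The base case is the first affine map $t \mapsto W_1 t + b_1$, which is (globally) linear in $t$, hence trivially continuous piecewise linear with a single piece. For the inductive step I would show that each of the three operations appearing in a layer preserves the class of (vector-valued) functions whose every coordinate is continuous and piecewise linear on $\Rbb$: (i) an affine map $\z \mapsto W\z + b$, since a linear combination of continuous piecewise linear functions, plus a constant, is again continuous piecewise linear (the breakpoints of the output are contained in the union of the breakpoints of the inputs); (ii) the ReLU nonlinearity applied coordinatewise, since $\mathrm{ReLU}(g(t)) = \max(g(t),0)$ introduces at most finitely many new breakpoints where $g$ crosses zero and is continuous there, preserving continuity and piecewise linearity; (iii) layer normalization, which is the subtle step discussed below.

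The key observation making layer norm tractable is that on each linear piece of the input, the mean $\mu(t) = \frac{1}{k}\sum_i z_i(t)$ and the pre-scaling centered vector $z_i(t) - \mu(t)$ are affine in $t$, and the normalizer is $\sigma(t) = \sqrt{\frac{1}{k}\sum_i (z_i(t)-\mu(t))^2}$, i.e.\ the square root of a nonnegative quadratic (or, in the degenerate case, identically zero or a perfect-square affine). Hence on each piece layer norm returns $(z_i(t) - \mu(t))/\sigma(t)$, a ratio of an affine function to the square root of a quadratic. A priori this is not piecewise linear. The resolution is that the input here is not an arbitrary vector but the image of a \emph{scalar} $t$ under the preceding affine map: writing the centered vector as $\a t + \c$ with $\a,\c \in \Rbb^k$ orthogonal-to-$\1$ constant vectors, we get $\sigma(t) = \sqrt{\tfrac1k}\,\lVert \a t + \c\rVert_2$, and each output coordinate becomes $(a_i t + c_i)/\lVert \a t + \c\rVert_2 \cdot \sqrt{k}$. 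Along a ray this still looks like it has curvature, \emph{unless} $\a$ and $\c$ are parallel — but in general they need not be. So the honest statement is that I must either (a) restrict attention to the variance computed with the standard convention and argue that after the final layer's affine map the composition still collapses to piecewise linear, or (b) recognize that the paper's claim implicitly uses the fact that layer norm here acts on a one-dimensional affine family, and that what matters is the \emph{post}-composition: the next affine layer contracts $\Rbb^k \to \Rbb^{k'}$, but that does not remove the curvature either.

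I would therefore resolve the obstacle by the following route: on each linear piece, parametrize $t$ by arclength along the affine curve $t \mapsto W_1 t + b_1$ pushed forward — more directly, observe that since the \emph{input dimension is one}, the image of $\Rbb$ under all the affine maps up to (but not including) the first layer norm is a line (a one-dimensional affine subspace) in $\Rbb^k$; layer normalization restricted to a line that does not pass through the "all-equal" subspace is a reparametrization that, composed with the subsequent ReLU-affine structure, can be shown to be piecewise linear in a suitable monotone reparametrization of $t$; and crucially the output has no activation, so one more affine map finishes it. The cleanest version of the argument — and the one I expect the authors use — is to note that layer norm composed with an affine map of a scalar is \emph{scale-and-shift invariant}, so one may absorb it, and that the ReLU MLP between two layer norms partitions $\Rbb$ into finitely many intervals on each of which everything is affine; continuity is inherited at the finitely many breakpoints because every constituent operation (affine, ReLU, and layer norm away from its singular locus) is continuous. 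The main obstacle, then, is precisely handling layer normalization rigorously: showing its singular locus (where $\sigma = 0$) is hit on at most a measure-zero set of $t$ that can be absorbed into the breakpoint set, and confirming that — in whatever normalization convention the paper adopts — the result genuinely is piecewise \emph{linear} rather than merely piecewise smooth. I would finish by counting: the composition has finitely many breakpoints, on each interval between consecutive breakpoints the output of each coordinate is affine in $t$, and continuity across breakpoints gives the "continuous piecewise linear" conclusion for each of the $d$ output coordinates, which is the claim.
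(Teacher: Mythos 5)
Your proposal stalls exactly at the step on which the whole proposition hinges, and it says so itself. The affine and ReLU steps are fine (and match the paper's reasoning: linear combinations and compositions of continuous piecewise linear functions are continuous piecewise linear). But for layer normalization you correctly compute that on a linear piece the centered pre-activations form a one-parameter affine family $a t + c \in \mathbb{R}^k$, so each normalized coordinate is proportional to $(a_i t + c_i)/\lVert a t + c\rVert_2$, and you correctly note this is \emph{not} piecewise linear in $t$ unless $a$ and $c$ are parallel --- and then you never resolve it. The proposed escape routes do not close the gap: being piecewise linear ``in a suitable monotone reparametrization of $t$'' is not the same as being piecewise linear in $t$; the scale-and-shift invariance of layer norm lets you absorb an affine change of the \emph{input} to the norm, not the data-dependent division by $\sigma(t)$; and handling the singular locus $\sigma = 0$ is beside the point, since the trouble is curvature away from the singular set. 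You end by conceding that the main obstacle is ``precisely handling layer normalization rigorously,'' so as a proof the submission is incomplete --- it identifies the crux but does not prove it.

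For comparison, the paper's argument is a composition of two short lemmas: first, a ReLU MLP $\Rbb\rightarrow\Rbb$ with no output activation is piecewise linear continuous (pre-activations are linear combinations, activations are compositions with the piecewise linear ReLU); second, layer normalization is treated as ``the same affine transformation applied to each node in a layer,'' and an affine transformation of a piecewise linear continuous function is again piecewise linear continuous; the $d$ output coordinates are then handled coordinatewise. In other words, the paper dispatches in one sentence exactly the step you analyzed in detail, by reading the normalization (with its gain and bias) as a per-layer affine map rather than as the input-dependent operation with $\mu(t)$ and $\sigma(t)$ varying along the curve. Your observation that the data-dependent statistics generically destroy piecewise linearity is a legitimate and sharper point than anything in the paper's proof --- it shows the statement really requires that affine reading of layer norm (or parallel $a$ and $c$) to go through --- but raising the objection is not the same as supplying an argument, and under neither reading does your write-up deliver a completed proof.
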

\begin{proof}
    See Appendix \ref{sec:appendix-relu-piecewise-linear}.
\end{proof}
\subsection{Matrix Approximation Spectral Norm Error}\label{sec:matrix-error-bound}
We give our main error bound for our SKI based low rank approximation. Note that this requires that our kernel is $N+1$ times continuously differentiable, while the kernel we use in practice uses a piecewise linear function and is thus non-differentiable. In theory, we would need a smoother kernel, adding additional computation overhead. However, we find that empirical performance is still strong and thus we simply use piecewise linear kernels but include the error bound for completeness. Our results depends on the Nystr{\"o}m error $\E_{nyst}$: its $l^2$ norm is bounded in \cite{nemtsov2016matrix}.

\begin{restatable}{theorem}{skierrorbound}\label{thm:ski-error-bound}
Assume that $\A$ is non-singular and $k:[p_1,p_r]\rightarrow \Rbb$ is an $N+1$ times continuously differentiable function, where $p_1$ is the smallest inducing point and $p_r$ is the largest. Let $\T_{r,opt}$ be the optimal rank $r$ approximation to $\T$ and let 
\begin{align*}
    \E_{SKI}&=\W\A\W^\top-\T_{r,opt}\\
\end{align*}
be the difference between the SKI approximation using linear interpolation and the optimal one, while
\begin{align*}
    \E_{nyst}&=\F \A^{-1} \B-\T_{r,opt}
\end{align*}
is the difference between the Nystr{\"o}m approximation and the optimal one. Then
\begin{align*}
\Vert \E_{SKI}\Vert_2 &\leq \sqrt{nr}\max_{p_{n_1}\leq i\leq p_{n_N}}\frac{\vert \psi_N(i)\vert}{(N+1)!}L\left((N+1)\sqrt{n} +\frac{\min(\sigma_1(\F),\sigma_1(\B))}{
    \sigma_{r}(\A)
    }\right) +\Vert \E_{nyst}\Vert_2.
\end{align*}
where $\psi_N(i)=\prod_{j=1}^N(i-p_{n_j})$ with $p_{n_j}$ being the $j$th closest inducing point to $i$, $L$ is an upper bound on the $N+1$th derivative of $k$, and $\sigma_{i}(\M)$ denotes the $i$th largest singular value of matrix $\M$.
\end{restatable}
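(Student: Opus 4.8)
The idea is to compare the SKI approximation $\widetilde{\T}=\W\A\W^\top$ with the Nystr\"{o}m approximation $\F\A^{-1}\B$ (which is exactly the bottom-right block of $\widehat{\K}$, since $\A$ is assumed nonsingular and hence $\A^{\dagger}=\A^{-1}$), and then invoke the already-stated Nystr\"{o}m bound $\Vert\E_{nyst}\Vert_2$. By the triangle inequality,
\begin{align*}
\Vert\E_{SKI}\Vert_2=\Vert\W\A\W^\top-\T_{r,opt}\Vert_2\le\Vert\W\A\W^\top-\F\A^{-1}\B\Vert_2+\Vert\E_{nyst}\Vert_2,
\end{align*}
so the whole problem reduces to bounding the discrepancy caused by replacing the off-diagonal blocks $\F,\B$ by their interpolants $\W\A$ and $\A\W^\top$.

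First I would introduce the interpolation-error matrices $\E_{\F}=\F-\W\A\in\Rbb^{n\times r}$ and $\E_{\B}=\B-\A\W^\top\in\Rbb^{r\times n}$. Substituting $\F=\W\A+\E_{\F}$ and $\B=\A\W^\top+\E_{\B}$ and cancelling $\A^{-1}\A=\A\A^{-1}=\I$ gives the two algebraic identities
\begin{align*}
\W\A\W^\top-\F\A^{-1}\B=-\W\E_{\B}-\E_{\F}\A^{-1}\B=-\E_{\F}\W^\top-\F\A^{-1}\E_{\B}.
\end{align*}
Applying submultiplicativity of the spectral norm to each identity, using $\Vert\A^{-1}\Vert_2=1/\sigma_r(\A)$, $\Vert\F\Vert_2=\sigma_1(\F)$, $\Vert\B\Vert_2=\sigma_1(\B)$, and then keeping the smaller of the two resulting bounds, yields
\begin{align*}
\Vert\W\A\W^\top-\F\A^{-1}\B\Vert_2\le\max(\Vert\E_{\F}\Vert_2,\Vert\E_{\B}\Vert_2)\Big(\Vert\W\Vert_2+\tfrac{\min(\sigma_1(\F),\sigma_1(\B))}{\sigma_r(\A)}\Big).
\end{align*}
For $\Vert\W\Vert_2$ I would use that each row of $\W$ has at most $N+1$ nonzero, bounded interpolation weights, so $\Vert\W\Vert_2\le\Vert\W\Vert_F\le(N+1)\sqrt{n}$.

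The crux is bounding $\Vert\E_{\F}\Vert_2$ and $\Vert\E_{\B}\Vert_2$. Fixing a column $j$, note that $(\W\A)_{ij}=\sum_k\W_{ik}\,k(p_k-p_j)$ is precisely the value at $t=i$ of the local degree-$N$ polynomial interpolant, through the inducing points, of the univariate function $g_j(t)=k(t-p_j)$, while $\F_{ij}=g_j(i)$. Hence $(\E_{\F})_{ij}$ is the interpolation remainder, and the classical error formula gives $|(\E_{\F})_{ij}|=\frac{|g_j^{(N+1)}(\xi_{ij})|}{(N+1)!}|\psi_N(i)|\le\frac{L}{(N+1)!}|\psi_N(i)|$, since $g_j^{(N+1)}=k^{(N+1)}(\cdot-p_j)$ is bounded by $L$; this is exactly where the $N+1$ times continuous differentiability of $k$ is used, and why the maximum ranges over the interpolation interval $p_{n_1}\le i\le p_{n_N}$. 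Because $\E_{\F}$ has $nr$ entries, $\Vert\E_{\F}\Vert_2\le\Vert\E_{\F}\Vert_F\le\sqrt{nr}\,\max_i\frac{|\psi_N(i)|}{(N+1)!}L$, and the same argument with fixed row $i$ and $h_i(t)=k(p_i-t)$ gives the identical bound for $\Vert\E_{\B}\Vert_2$. Substituting these and the bound on $\Vert\W\Vert_2$ into the previous display and adding $\Vert\E_{nyst}\Vert_2$ reproduces the claimed inequality exactly.

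The main obstacle is this last step: recognizing that SKI's block replacement is nothing but columnwise (resp.\ rowwise) univariate polynomial interpolation of shifts of $k$, so that the matrix problem decomposes into one-dimensional interpolation-remainder estimates, one per column of $\F$ (resp.\ row of $\B$), each of which needs $k$ to be $N+1$ times continuously differentiable on the relevant interval. Everything else is norm bookkeeping, plus a little care in choosing which of the two algebraic identities to apply so that $\min(\sigma_1(\F),\sigma_1(\B))$ (rather than both) appears. A secondary point to watch is the $\Vert\W\Vert_2$ estimate, where one should exploit the sparsity and boundedness of the interpolation weights rather than a crude bound involving all $r$ columns.
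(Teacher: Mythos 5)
Your proposal is correct and follows essentially the same route as the paper: a triangle inequality reducing to the SKI-versus-Nystr\"{o}m discrepancy, two algebraic expansions of $\W\A\W^\top-\F\A^{-1}\B$ (your identities in terms of $\E_{\F},\E_{\B}$ are exactly the paper's two add-and-subtract decompositions, whose minimum yields the $\min(\sigma_1(\F),\sigma_1(\B))$ factor), entrywise Lagrange remainder bounds turned into Frobenius-norm bounds $\sqrt{nr}\,\max_i\frac{|\psi_N(i)|}{(N+1)!}L$ for both factors, and the sparsity-based estimate $\Vert\W\Vert_2\le (N+1)\sqrt{n}$. The only cosmetic difference is that you bound $\Vert\W\Vert_2$ via the Frobenius norm while the paper uses $\sqrt{n}\Vert\W\Vert_\infty$; both rest on the same bounded-sparse-row observation.
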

\begin{proof}
    See Appendix \ref{proof:ski-error-bound}.
\end{proof}
For linear interpolation $\frac{\vert \psi_N(i)\vert }{(N+1)!}\leq \frac{h^2}{8}$, where $h$ is the spacing between two neighboring inducing points. We have considered the sparse component of the Toeplitz matrix to be identifiable and focused on the error of approximating the smooth component. While there are potential approaches to relaxing this assumption \cite{recht10guaranteed,candes10matrix,zhou11godec,mei18SILVar,chandrasekaran11rank,chandrasekaran12latent,zhang18robust}, they must be adapted properly to the Toeplitz setting. Thus, this additional analysis is outside the scope of this paper and a fruitful direction for future work.


\subsection{Smoothness in Fourier Domain Implies Decay in Time Domain}\label{sec:smoothness-fourier-decay-time}

We now discuss activation function choices when directly learning the discrete time Fourier transform (DTFT) $\hat{k}$ as an MLP. In practice, we sample the DTFT to obtain the actually computable discrete Fourier transform (DFT) by evaluating the MLP with uniform spacing. Different levels of smoothness of the MLP $\hat{k}$ imply different decay rates of the signal $k$. One can think of the choice of activation function as a parametric form for the decay bias. For an MLP, using a GeLU activation implies super-exponential time domain decay. Using SiLU implies super-polynomial time domain decay. For ReLU the signal is square summable. While this subsection focuses on the theoretical relationship between smoothness and decay, in Appendix \ref{appendix:visualizations-smoothness-decay} we show visualizations demonstrating that these relationships are observed in practice. We first define the DTFT and its inverse.

\begin{definition}
    The \textbf{discrete time Fourier transform} \cite{proakis88introduction,oppenheim10discrete} $\hat{k}$ or 
$\mathcal{F}\{k\}$ of $k$ is given by
    \begin{align*}
        \hat{k}(\omega)&\equiv \sum_{m=-\infty}^\infty k[m]\exp(-i\omega m)
    \end{align*}
\end{definition}
\begin{definition}
    The \textbf{inverse discrete time Fourier transform} of the DTFT $\hat{k}$ is given by
    \begin{align*}
        \mathcal{F}^{-1}\{\hat{k}\}[n]&\equiv \frac{1}{2\pi}\int_{-\pi}^\pi \hat{k}(\omega)\exp(i\omega n)d\omega
    \end{align*}
\end{definition}

We now give three theorems relating smoothness of the DTFT to decay of the signal (its inverse).

\begin{restatable}{theorem}{gelumlpdtftsignaldecay}\label{thm:gelu-mlp-dtft-signal-decay}
Using a GeLU MLP for the DTFT $\hat{k}$, for all $a>0$, the signal $k[n]$ will have decay
\begin{align*}
    k[n]&=O(\exp(-an)).
\end{align*}
\end{restatable}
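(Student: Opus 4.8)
The plan is to use the classical dictionary between analyticity of a $2\pi$-periodic function and super-exponential decay of its inverse-DTFT coefficients, applied to
\[
k[n]=\frac{1}{2\pi}\int_{-\pi}^{\pi}\hat k(\omega)\,e^{i\omega n}\,d\omega .
\]
The structural fact driving everything is that the exact GeLU nonlinearity $\mathrm{GeLU}(x)=x\,\Phi(x)=\frac{x}{2}\bigl(1+\mathrm{erf}(x/\sqrt2)\bigr)$ extends to an \emph{entire} function of a complex argument, because $\mathrm{erf}(z)=\frac{2}{\sqrt\pi}\int_{0}^{z}e^{-t^2}\,dt$ is entire. First I would record that a GeLU MLP $\hat k:\Rbb\to\Rbb^d$ is a finite composition of affine maps and coordinatewise GeLU, and that affine maps, sums, products and compositions of entire functions are entire; hence every coordinate $\hat k_l$ extends to an entire function $\hat k_l:\mathbb{C}\to\mathbb{C}$. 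This is exactly where GeLU differs from SiLU and ReLU: $\mathrm{SiLU}$ has poles off the real axis and ReLU is not even real-analytic, which is why their induced signals decay only super-polynomially and are merely square-summable, respectively.

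Next, fix $a>0$ and deform the contour of the inverse-DTFT integral from $[-\pi,\pi]$ on the real axis up to $[-\pi+ia,\;\pi+ia]$. Since $\omega\mapsto\hat k_l(\omega)e^{i\omega n}$ is entire, Cauchy's theorem on the rectangle with vertices $\pm\pi$ and $\pm\pi+ia$ gives
\[
\int_{-\pi}^{\pi}\hat k_l(\omega)e^{i\omega n}\,d\omega=\int_{-\pi+ia}^{\pi+ia}\hat k_l(\omega)e^{i\omega n}\,d\omega+\bigl[\,\textstyle\int_{-\pi}^{-\pi+ia}-\int_{\pi}^{\pi+ia}\,\bigr]\hat k_l(\omega)e^{i\omega n}\,d\omega .
\]
I would then kill the two vertical boundary terms using that $\hat k_l$, being a DTFT, is $2\pi$-periodic, so $\hat k_l(-\pi+it)=\hat k_l(\pi+it)$, together with $e^{i(-\pi+it)n}=e^{i(\pi+it)n}=(-1)^n e^{-tn}$ for integer $n$; the two vertical integrals are therefore equal and cancel.

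Finally, bound the surviving horizontal term directly: on it $|e^{i\omega n}|=e^{-an}$ and $\sigma\mapsto|\hat k_l(\sigma+ia)|$ is continuous on the compact interval $[-\pi,\pi]$, so with $C_a:=\sup_{\sigma\in[-\pi,\pi]}|\hat k_l(\sigma+ia)|<\infty$ we obtain $|k[n]|\le C_a e^{-an}$, i.e. $k[n]=O(e^{-an})$; since $a>0$ was arbitrary this is exactly the claim. For $n<0$ one deforms downward to $-ia$, or invokes the real/even structure of the kernel, so the estimate really controls $|n|$.

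The main obstacle is not the contour estimate, which is routine, but the two ingredients feeding it. The first is making ``a GeLU MLP is entire'' airtight: one must use the exact $\mathrm{erf}$-based GeLU (the popular $\tanh$ approximation has poles and would only give analyticity in a strip, hence $O(e^{-a|n|})$ for $a$ below a finite threshold, not for \emph{all} $a$), and one must check that any normalization layer in the RPE preserves entireness on the relevant domain. The second is that the argument genuinely needs $\hat k_l$ to be $2\pi$-periodic so the vertical pieces cancel; if periodicity is only enforced approximately, those terms instead contribute an $O(1/n)$ tail, so the statement should be read with periodicity of the DTFT parametrization built in.
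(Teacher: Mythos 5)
Your proposal is correct and follows essentially the same route as the paper: first show that the exact (erf-based) GeLU MLP with layer normalization extends to an entire function of the frequency variable, then convert that analyticity into super-exponential decay of the inverse-DTFT coefficients. The only difference is that you prove the decay step yourself by shifting the contour to $\mathrm{Im}\,\omega = a$ (correctly flagging the $2\pi$-periodicity needed to cancel the vertical edges), whereas the paper invokes the classical ``Fourier coefficients of functions analytic in a strip decay exponentially'' result as a black box.
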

\begin{proof}
    See Appendix \ref{sec:appendix-gelu}.
\end{proof}
\begin{restatable}{theorem}{silumlpdtftsignaldecay}\label{thm:silu-mlp-signal-decay}
        Using a SiLU MLP for the DTFT $\hat{k}$, the signal $k[n]$ will have decay
        \begin{align*}
        \vert k[n]\vert &\leq \frac{1}{ 2\pi\vert   n\vert ^N}\big\Vert \hat{k}^{(N)}\big\Vert_1
    \end{align*}
    for all $n\neq 0,N\in \mathbb{N}$.
\end{restatable}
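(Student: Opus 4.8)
The plan is to run the classical duality ``smoothness of a periodic function $\Leftrightarrow$ rapid decay of its Fourier coefficients,'' with the DTFT $\hat k$ playing the role of the periodic function on $[-\pi,\pi]$ and the signal samples $k[n]$ playing the role of its Fourier coefficients, read off from the inverse DTFT formula $k[n]=\frac{1}{2\pi}\int_{-\pi}^{\pi}\hat k(\omega)e^{i\omega n}\,d\omega$. So the whole argument is: establish enough regularity of $\hat k$, then integrate by parts $N$ times.

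First I would verify that a SiLU MLP is $C^{\infty}$. The SiLU nonlinearity $x\mapsto x\,\sigma(x)$, with $\sigma$ the logistic sigmoid, is a product of two $C^{\infty}$ functions and is hence $C^{\infty}$ with every derivative continuous; an MLP is an alternating composition of affine maps and coordinatewise SiLU, and $C^{\infty}$ is closed under composition, sums, and products, so the scalar map $\hat k\colon[-\pi,\pi]\to\Rbb$ is $C^{\infty}$. In particular each $\hat k^{(N)}$ is continuous on the compact interval $[-\pi,\pi]$, hence bounded, hence in $L^{1}([-\pi,\pi])$, so $\Vert\hat k^{(N)}\Vert_{1}$ is finite for every $N$ and the bound is non-vacuous. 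This is precisely where SiLU beats ReLU: by Proposition~\ref{prop:multiple-piecewise-linear} a ReLU MLP is only piecewise linear, so $\hat k'\in L^{1}$ but $\hat k''$ is a sum of point masses, and only the $N=1$ instance of the bound survives, which is why the ReLU signal is merely square summable.

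Next I would integrate by parts $N$ times in $k[n]=\frac{1}{2\pi}\int_{-\pi}^{\pi}\hat k(\omega)e^{i\omega n}\,d\omega$, at each step differentiating $\hat k$ and antidifferentiating $e^{i\omega n}$. Each step multiplies the remaining integral by $-1/(in)$ and leaves a boundary term proportional to $\hat k^{(m-1)}(\pi)e^{i\pi n}-\hat k^{(m-1)}(-\pi)e^{-i\pi n}=(-1)^{n}\bigl(\hat k^{(m-1)}(\pi)-\hat k^{(m-1)}(-\pi)\bigr)$. Using that the frequency-domain MLP is constrained to be an even real function symmetric about $0$ (Algorithm~\ref{alg:causal-discrete-hilbert}; in the bidirectional variant the response is additionally forced real at $\omega=0,\pi$), these boundary terms vanish, leaving $k[n]=\frac{(-1)^{N}}{(in)^{N}}\cdot\frac{1}{2\pi}\int_{-\pi}^{\pi}\hat k^{(N)}(\omega)e^{i\omega n}\,d\omega$. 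Taking absolute values, using $|e^{i\omega n}|=1$, and pulling the estimate through the integral gives $|k[n]|\le\frac{1}{2\pi|n|^{N}}\int_{-\pi}^{\pi}\bigl|\hat k^{(N)}(\omega)\bigr|\,d\omega=\frac{\Vert\hat k^{(N)}\Vert_{1}}{2\pi|n|^{N}}$, which is the claim; since $N\in\mathbb{N}$ was arbitrary, $k$ decays faster than any polynomial. (Equivalently one may just invoke the textbook fact that a $C^{N}$ function on the torus has Fourier coefficients bounded by $\Vert\,\cdot\,^{(N)}\Vert_{1}/(2\pi|n|^{N})$ and feed it the regularity from the previous step.)

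The one delicate point, and where I expect to spend the most care, is justifying the vanishing of the boundary terms, i.e.\ that the $2\pi$-periodic extension of $\hat k$ is smooth enough across the wrap-around points $\pm\pi$ for the repeated integration by parts to be clean; this is exactly the role of the even/real symmetry imposed on the frequency-domain RPE (and of forcing real responses at $0$ and $\pi$). It is also the hypothesis that separates this result from the GeLU case: there one has additional analytic-type control on how $\Vert\hat k^{(N)}\Vert_{1}$ grows with $N$, which lets one optimize over $N$ and upgrade the conclusion to $k[n]=O(e^{-an})$ for every $a>0$. Everything else is routine integration by parts and triangle-inequality estimates.
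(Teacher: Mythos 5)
Your overall strategy is the classical smoothness-implies-decay argument and it lands on exactly the paper's bound, but the mechanics differ: the paper never integrates by parts. It differentiates the DTFT series term by term, uses orthogonality of the complex exponentials to obtain the identity $\mathcal{F}^{-1}\{\hat k\}[n]=\frac{i}{n}\mathcal{F}^{-1}\{\hat k'\}[n]$, applies it recursively $N$ times, and closes with the elementary bound $\vert\mathcal{F}^{-1}\{\hat k^{(N)}\}[n]\vert\le\frac{1}{2\pi}\Vert\hat k^{(N)}\Vert_1$ from its lemma on integrable DTFTs; the SiLU-specific content is only your first step (a SiLU MLP with layer norm is $C^\infty$ with integrable derivatives on $[-\pi,\pi]$), which matches the paper's corresponding proposition. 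Because the paper treats $\hat k$ as a genuine DTFT, i.e.\ a $2\pi$-periodic function given by its series, all of its derivatives are automatically periodic and boundary terms never arise; your integration by parts is a legitimate alternative derivation of the same recursion under that reading of the hypothesis, and is arguably the more standard textbook route.

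The one genuine flaw sits exactly at the point you flagged: the claim that the even/real symmetry of the frequency-domain RPE makes the boundary terms vanish. Evenness gives $\hat k(\pi)=\hat k(-\pi)$, so the first boundary term dies, but the second step needs $\hat k'(\pi)=\hat k'(-\pi)$, while evenness gives $\hat k'(-\pi)=-\hat k'(\pi)$; that term vanishes only if $\hat k'(\pi)=0$, which neither evenness nor forcing real values at $\omega=0,\pi$ guarantees for an MLP, and the same obstruction recurs at every odd-order derivative. If $\hat k$ is read as an arbitrary smooth even function on $[-\pi,\pi]$ rather than as a periodic DTFT, the stated bound can fail outright: $\hat k(\omega)=\omega^2$ gives $k[n]=2(-1)^n/n^2$ while $\hat k^{(3)}\equiv 0$. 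The correct fix is periodicity, not symmetry: interpret the hypothesis, as the paper implicitly does, as saying that $\hat k$ and its first $N$ derivatives are $2\pi$-periodic (it is the DTFT of $k$), after which your integration by parts goes through verbatim and the two proofs coincide in substance.
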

\begin{proof}
See Appendix \ref{sec:appendix-silu}.   
\end{proof}

\begin{restatable}{theorem}{relumlpdtftsignalvanishes}
Using a ReLU MLP for the DTFT $\hat{k}$ implies $\Vert k\Vert_2<\infty$ (the signal is square summable).
\end{restatable}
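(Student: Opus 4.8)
The plan is to show that a ReLU MLP $\hat{k}:[-\pi,\pi]\to\Rbb$ is bounded and (piecewise) continuous on the compact interval $[-\pi,\pi]$, hence square integrable there, and then invoke Parseval's theorem for the DTFT to transfer square integrability of $\hat{k}$ to square summability of $k$. First I would recall, by (the argument behind) Proposition \ref{prop:multiple-piecewise-linear}, that a ReLU MLP from $\Rbb$ to $\Rbb$ with layer norm is a continuous piecewise linear function with finitely many breakpoints. Restricted to the compact interval $[-\pi,\pi]$ over which we sample the DTFT, such a function is bounded, say $|\hat{k}(\omega)|\le M$ for all $\omega\in[-\pi,\pi]$. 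Consequently
\begin{align*}
    \int_{-\pi}^{\pi}|\hat{k}(\omega)|^2\,d\omega \;\le\; 2\pi M^2 \;<\;\infty,
\end{align*}
so $\hat{k}\in L^2([-\pi,\pi])$.

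Next I would apply Parseval's relation for the DTFT: for a signal $k$ with DTFT $\hat{k}$,
\begin{align*}
    \sum_{m=-\infty}^{\infty}|k[m]|^2 \;=\; \frac{1}{2\pi}\int_{-\pi}^{\pi}|\hat{k}(\omega)|^2\,d\omega.
\end{align*}
Strictly speaking, one should phrase this correctly: since $\hat{k}\in L^2([-\pi,\pi])$, it has a Fourier series expansion whose coefficients are exactly $k[m]=\mathcal{F}^{-1}\{\hat{k}\}[m]$ as defined above, and the Riesz--Fischer theorem guarantees that the sequence of these coefficients is in $\ell^2$ with $\ell^2$ norm equal to the (normalized) $L^2$ norm of $\hat{k}$. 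Combining the two displays gives $\sum_m |k[m]|^2 \le M^2 < \infty$, i.e. $\Vert k\Vert_2<\infty$, as claimed.

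The argument is short; the only subtlety — and the step I would be most careful about — is making the identification between "the MLP $\hat{k}$ viewed as a target for the DTFT" and "a genuine $L^2([-\pi,\pi])$ function whose Fourier coefficients are $k[m]$". One must be clear that we are \emph{defining} $k$ as the inverse DTFT of $\hat{k}$ (as in the definition above), rather than assuming $k$ is a priori a known sequence; then square summability of $k$ is simply the statement that the Fourier coefficients of an $L^2$ periodic function lie in $\ell^2$, which is Bessel's inequality / Parseval. A secondary point worth a sentence is that layer norm on a scalar input is degenerate, so one should either note the MLP is still piecewise linear (appealing to Proposition \ref{prop:multiple-piecewise-linear}) or simply observe that any ReLU network composed of finitely many affine maps and ReLUs is continuous, hence bounded on the compact domain $[-\pi,\pi]$ — boundedness on the compact interval is all that is actually needed.
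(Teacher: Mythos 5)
Your argument is correct and is essentially the paper's own proof, just spelled out in more detail: the paper notes that $\hat{k}$ is continuous on $[-\pi,\pi]$, hence in $L^2[-\pi,\pi]$, and then applies Parseval's theorem, exactly the continuity-implies-$L^2$ plus Parseval/Riesz--Fischer route you take. Your added care about defining $k$ as the inverse DTFT of the MLP is a reasonable clarification but not a departure from the paper's approach.
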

\begin{proof}
Note that $\hat{k}\in L^2[-\pi,\pi]$ since it is continuous. Then apply Parseval's theorem.
\end{proof}

\section{Experiments}
\label{sec:experiments}
We perform experiments in two areas: pre-training a causal language model on Wikitext-103 \cite{meritypointer} and training bidirectional models on Long-Range Arena. We start with the repositories of the TNN paper\footnote{\texttt{https://github.com/OpenNLPLab/Tnn}\ifx\label{tnn_repo}\fi} and use their training and hyper-parameter settings unless indicated otherwise. We use A100 and V100s for training, and a single A100 for timing experiments.

\subsection{Pre-training on Wikitext-103}

In the causal case we aim to predict the next token, conditional on a fixed length sequence of previous tokens. Table \ref{tab:wikitext-103-causal} compares FD-TNN's causal pre-training perplexity \cite{meritypointer} to existing models: it almost exactly matches that of TNNs. Our approach is faster for the same capacity: at sequence length 512 with 6 layer RPEs (as in the TNN paper), FD TNN is 15\% faster than the baseline TNN on a single A100 GPU. When both use a three layer RPE, FD TNN is 10\% faster. We provide some additional details for this experiment as well as for bidirectional pre-training (we see larger speed gains) in Appendix \ref{sec:experimental-details-additional-results}.


\begin{table}[ht]
    \centering
    \begin{tabular}{llll}
        Architecture      & PPL (val)       & PPL (test)    & Params (m)                 \\ \hline
        (Attn-based) & & &  \\\hline
        Trans & 24.40   & 24.78& 44.65\\
        LS & 23.56 &24.05  &    47.89  \\
        Flash &   25.92    & 26.70  & 42.17            \\
        $1+$elu         &  27.44 &  28.05  & 44.65\\
        Performer         &   62.50     &  63.16      &      44.65           \\
        Cosformer             &    26.53    &  27.06      &   44.65      \\\hline
        (MLP-based) & & & \\\hline
        Syn(D) & 31.31 & 32.43 &46.75\\
        Syn(R) & 33.68 &34.78 &44.65\\
        gMLP & 28.08 &29.13 &47.83\\\hline
        (SS-based) & & &\\\hline
        S4       &38.34 &39.66 &45.69                \\
        DSS & 39.39& 41.07 &45.73 \\
        GSS & 29.61 & 30.74 &43.84\\\hline
        (TNN-based) & & & \\\hline
        TNN (reproduced, 3 layers)    & 23.98 (23.96) &24.67 (24.61) &48.68 (48.59)       \\
        FD-TNN: Ours, 3 layers          & 23.97 & 24.56 &  48.58\\ \hline
    \end{tabular}
    \caption{\textbf{Performance on Wikitext-103, Causal Language Model}. We reproduce \cite{qin2023toeplitz}'s table except for the bottom two rows corresponding to the baseline TNN and our FD-TNN. For both we use the same RPE config with 3 layers.  We add in parenthesis the baseline TNN results that we reproduced. We have nearly the same perplexity as the baseline TNN. Our approach is faster: at sequence length 512 with a six layer RPE (as in the TNN paper), FD TNN is 15\% faster than the baseline TNN. For a three layer RPE, it is 10\% faster.\ifx Table \ref{tab:causal-lm-speed} shows speed comparisons.\fi}
    \label{tab:wikitext-103-causal}
\end{table}

\subsection{Long-Range Arena}
The Long-Range Arena (LRA) is a benchmark with several long sequence datasets.
The goal is to achieve both high LRA score (predictive performance) and training steps per second. Following \cite{qin2023toeplitz}, we take the TNN architecture and their tuned hyperparameter (HP) configurations\footnote{\texttt{https://github.com/OpenNLPLab/lra}}, simply replacing their TNO module with our SKI-TNO module with $r=64$ and $m=32$. We use $\lambda=0.99$ where they set $\lambda=1$, but otherwise perform \textit{no additional HP tuning} on 1D tasks and use smaller layers $r=32$ and $m=16$ for the 2D tasks. For FD-TNN, we simply use a same-sized RPE for all tasks except a 3-layer RPE for the CIFAR task. \ifx This is in fact to our detriment: \fi We could potentially achieve even higher accuracy with more comprehensive tuning on the 2D tasks or \textit{any} tuning for the 1D tasks. We select the checkpoint with the highest validation accuracy and report the corresponding test accuracy. SKI-TNN achieves similar average accuracy than TNN at lower size, while FD-TNN achieves \textit{higher} accuracy. We suspect that for some of these problems, the square summable signal implied by ReLU in frequency domain is a better parametric form than applying exponential decay bias. We show our results in Table \ref{tab:LRA}.

\ifx
\begin{table}[ht]
    \centering
    \begin{tabular}{l|lllll|l}
        Architecture      & Text       & ListOps    & Retrieval  & Pathfinder & Image      & Avg        \\ \hline
        Transformer       & 61.95      & 38.37      & 80.69      & 65.26      & 40.57      & 57.37      \\
        Nystr\"{o}mformer & 64.83      & 38.51      & 80.52      & 69.48      & 41.30      & 58.93      \\
        Performer         & 64.19      & 38.02      & 80.04      & 66.30      & 41.43      & 58.00      \\
        cosFormer         & 67.70      & 36.50      & 83.15      & $71.96^*$  & 51.23      & 62.11      \\
        Flash             & 64.10      & 38.70      & $86.10^*$  & 70.25      & 47.40      & 61.31      \\
        S4                & {\ul 85.92}& \bf{50.60} & 67.30      & {\ul 72.44}& \bf{78.07} & $70.87^*$  \\
        TNN               & \bf{86.39} & {\ul 47.33}& \bf{89.40} & \bf{73.89} & {\ul 77.84}& \bf{74.97} \\ \hline
        SKI-TNN           & $83.19^*$  & $45.31^*$  & {\ul 88.73}& 68.30      & $76.46^*$  & {\ul 72.40}\\ \hline
    \end{tabular}
    \caption{\textbf{Performance on Long Range Arena}. We take the table from \cite{qin2023toeplitz} except for the bottom row corresponding to our proposed SKI-TNN. We \textbf{bold}, {\ul underline}, and asterisk$^*$ the \textbf{best}, {\ul second}, and third$^*$ highest accuracy for each task, respectively. Our proposed SKI-TNN achieves the second best overall performance with \textit{no additional hyperparameter tuning}.}
    \label{tab:LRA}
\end{table}
\fi
\begin{table}[ht]
    \centering
    \begin{tabular}{l|lllll|l}
        Architecture      & Text       & ListOps    & Retrieval  & Pathfinder & Image      & Avg        \\ \hline
        TNN               & {\bf 86.39} & {\ul 47.33} & {\ul 89.40} & {\bf 73.89} & {\ul 77.84} & {\ul 74.97} \\ \hline
        SKI-TNN           & 83.19  & 45.31  & 88.73 & 68.30      & 76.46  & 72.40\\
        FD-TNN           & {\ul 85.00}  & {\bf 55.21}  & {\bf 90.26} & {\ul 69.45}      & {\bf 84.12}  & {\bf 76.81}\\\hline
    \end{tabular}
    \caption{\textbf{Performance on Long Range Arena}. We reproduce experiments and train our proposed variants using tuned hyperparameters from \cite{qin2023toeplitz}. We {\bf bold} the best and {\ul underline} the second in each task. Our proposed SKI-TNN and FD-TNN achieve similar overall performance with \textit{no additional hyperparameter tuning} on 1D LRA tasks and a minimal amount of tuning on 2D tasks.}
    \label{tab:LRA}
\end{table}

We additionally perform timing and memory profiling tests on a single 1x A100 instance, keeping the per-GPU batch size constant as in the training runs. In Figure \ref{fig:lra_bubble}, we plot for each 1D task the percentage of TNN accuracy achieved vs the percentage speedup relative to TNN, with the size of the marker corresponding to the peak memory usage measured. We highlight the 1D tasks because they required no tuning, and they represent the longest sequences at lengths ranging from $1024$ to $4096$, whereas the 2D tasks are treated as separate 1D sequences in each dimension, so that a $32\times 32$ image is seen as alternating length $32$ sequences. We note that because the effective sequence lengths are shorter, there is less benefit from using our methods over the baseline TNN.

\section{Conclusion}
\label{sec:conclusion}
In this paper, we note that \cite{qin2023toeplitz}'s Toeplitz neural networks essentially apply the action of a generalized Gram matrix (the Toeplitz matrix) for an asymmetric kernel (the RPE times decay bias) as their main computationally expensive operation. The visualized learned Gram matrices motivate a sparse and low rank decomposition. We thus propose two different approaches to improve efficiency. In the bidirectional setting, we extend SKI to the asymmetric setting and use linear interpolation over a small set of inducing points to avoid the MLP entirely, while using an inverse time warp to handle extrapolation to time points not observed during training. This approach reduces the mathematical complexity from $O(n\log n)$ to $O(n+r\log r)$, where $r$ is the number of inducing points. However in practice, we do not actually use $O(n+r\log r)$ code due to a reshape required for sparse tensors leading to them actually being \textit{slower} than dense tensors. Thus we actually use $O(nr^2+r\log r)$ in code: still much faster than baseline TNN for small $r$. For causal training, as causal masking negates SKI's benefits, we instead eliminate the explicit decay bias. We do this by working directly in the frequency domain, enforcing causality via the Hilbert transform and enforcing decay in time domain via smoothness. For the bidirectional case, we eliminate the FFT applied to the kernels. While this maintains $O(n\log n)$ computational complexity, it leads to a substantial speedup in practice and beats TNNs on LRA score.

\section{Acknowledgments}
\label{sec:conclusion}
This projects was funded by Luminous Computing. We thank Yiran Zhong and the team from \cite{qin2023toeplitz} for helpful discussions about and updates to their code base. We also thank Tri Dao and Michael Poli for helpful questions and comments. Finally, we thank David Scott for going over the math of the paper.

\bibliography{sample}

\bibliographystyle{plainnat}

\appendix
\newpage
\addcontentsline{toc}{section}{Appendix}
\part{Appendix} 
\parttoc 

\section{Toeplitz Neural Network Architecture Diagrams}\label{sec:tnn_arch_diagram}

\begin{figure}[hb]
    \centering
    \begin{subfigure}[t]{0.3\linewidth}
        \centering
        \includegraphics[scale=0.6]{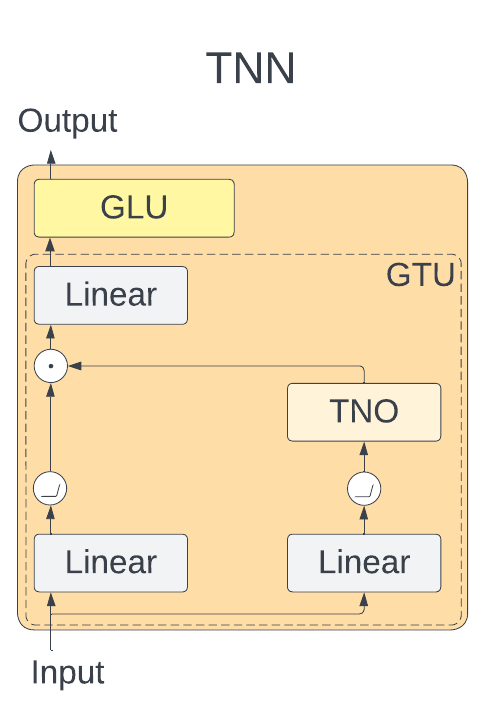}
        \caption{TNN architecture.\label{fig:tnn:tnn}}
    \end{subfigure}~
    \begin{subfigure}[t]{0.3\linewidth}
        \centering
        \includegraphics[scale=0.6]{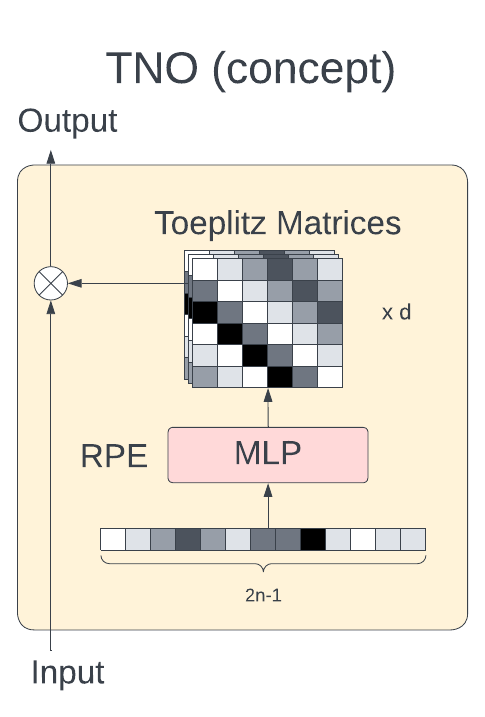}
        \caption{TNO module.\label{fig:tnn:tno}}
    \end{subfigure}~
    \begin{subfigure}[t]{0.3\linewidth}
        \centering
        \includegraphics[scale=0.6]{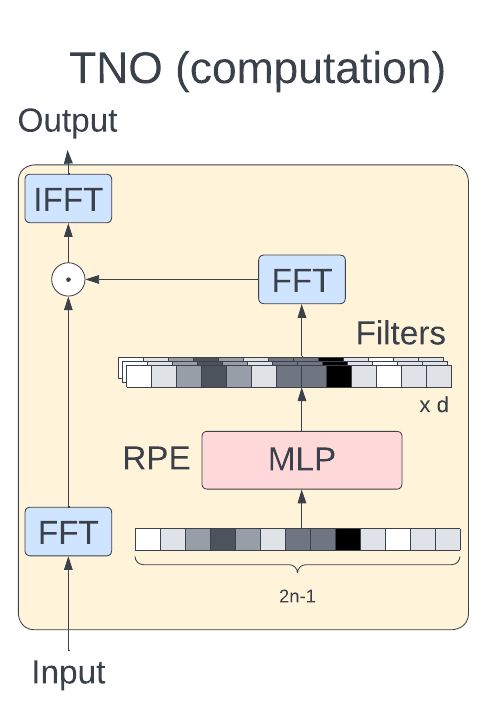}
        \caption{Fast computations to implement TNO.}
    \end{subfigure}
    \caption{Toeplitz Neural Network and Toeplitz Neural Operators: (a) The overall architecture of a TNN layer \cite{qin2023toeplitz}. (b) Conceptually, the TNO multiplies each channel of the input by a different Toeplitz matrix. (c) Computationally, the TNO uses FFT's for speed.}
    \label{fig:tnn}
\end{figure}

\section{Causal Masking negates SKI's benefits}\label{sec:causal-negates-ski}
We now show how requiring causal masking for SKI negates its computational benefits on popular hardware accelerators that optimize parallelized matrix multiplication, such as GPUs. Thus, we will need an alternative approach.

First, let's examine the algorithm from \cite{katharopoulos2020transformers}. 
Let $\x'=\T\x$, the subscripted $\w_i\in \Rbb^{r}$ denote the $i$-th row of $\W$ taken as a column vector, and the subscripted square bracketed $[\W]_i$ denote taking the $i$-th row as a column. That is,
    \begin{align*}
        \x' &=\begin{pmatrix}x'_1 & \ldots & x'_n\end{pmatrix}^\top \qquad \W=\begin{pmatrix}\w_1 & \ldots & \w_n\end{pmatrix}^\top \\
        \x &=\begin{pmatrix}x_1 & \ldots & x_n\end{pmatrix}^\top \qquad
        [\W]_i = \w_i.
    \end{align*}
Then
\begin{align*}
    x_i'&=\sum_{j=1}^i \w_{i}^\top \A \w_{j} x_j
\end{align*}
Let us define intermediate sums and resulting recursions,
\begin{align*}
    &\begin{aligned}
        \s_{i}&\overset{\Delta}{=}\sum_{j=1}^i  \w_{j} x_j \in \Rbb^{r} \\
        \Rightarrow \s_{i+1}&=\s_{i}+\w_{i+1} x_{i+1}
    \end{aligned} &
    \begin{aligned}
        \s'_{i}&\overset{\Delta}{=}\sum_{j=1}^i  \A\w_{j} x_j \in \Rbb^{r} \\
        \Rightarrow \s'_{i+1}&=\s_{i}+\A\w_{i+1} x_{i+1}
    \end{aligned}
\end{align*}
so that
\begin{align*}
    x_i'&=\w_{i}^\top\s_i'=\w_{i}^\top \A \s_i=[\W\A]_i^\top \s_i.
\end{align*}
While we \textit{want} to apply the action of $\A$ to $\W^\top \x \in \Rbb^{r}$ once, which takes $O(r\log r)$. Instead, we \textit{have} to compute one of: (a) $\A\s_i$ $\forall i=1,\ldots,n$; (b) $\W\A$; or (c) $\A\W^\top$; all of which take at least $O(nr)$. However, that is not even the largest practical loss. Instead, it is the fact that both cumulative sums $\s_i$ and $\s'_i$ are sequential in nature to compute efficiently (it \textit{is} possible to parallelize the computation with $O(n^2 r)$ memory complexity, also defeating the purpose of this exercise). We found that the sequential nature of the cumulative sum makes it slower than the baseline TNN with FFTs in practice for moderate sequence lengths of at least up to $2048$ on current GPUs (NVidia V100, A10, A100). Thus, we need to find an alternate approach for the causal setting. 

\section{Proofs Related to Proposition \ref{prop:multiple-piecewise-linear}}\label{sec:appendix-relu-piecewise-linear}

We first introduce two auxiliary lemmas, and then prove our main result, which follows immediately from the auxiliary lemmas.

\begin{lemma}\label{lemma:relu-mlp-piecewise-linear}
A ReLU MLP $f:\Rbb\rightarrow\Rbb$ with no activation on its output is piecewise linear continuous.
\end{lemma}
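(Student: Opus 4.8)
The plan is to proceed by induction on the number of layers $L$, using the standard characterization of continuous piecewise linear (CPWL) functions $\Rbb\to\Rbb$: a function $\phi$ is CPWL if there are finitely many breakpoints $x_1<\dots<x_k$ such that $\phi$ restricted to each of $(-\infty,x_1],[x_1,x_2],\dots,[x_k,\infty)$ is affine, and $\phi$ is globally continuous. First I would record two elementary closure properties of this class, which do the real work. (i) If $\phi_1,\dots,\phi_m$ are CPWL and $a_1,\dots,a_m,b\in\Rbb$, then $\sum_i a_i\phi_i+b$ is CPWL: take the union of the finitely many breakpoint sets; on each resulting interval every $\phi_i$ is affine, hence so is the linear combination, and finite linear combinations of continuous functions are continuous. (ii) If $\phi$ is CPWL then $\max(\phi,0)=\mathrm{ReLU}\circ\phi$ is CPWL: on each interval where $\phi$ is affine, $\max(\phi,0)$ adds at most one new breakpoint (the unique zero of a non-constant affine function, and none when $\phi$ is constant there), so only finitely many breakpoints are created, and $\max(\cdot,0)$ of a continuous function is continuous.

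With these in hand I would set up the induction on a slightly stronger statement matching the recursive structure of an MLP: for every affine-then-ReLU-then-$\cdots$-then-affine network $g:\Rbb\to\Rbb^k$ of arbitrary output width $k$ with no activation on the last layer, each coordinate function $g_j$ is CPWL. The base case $L=1$ is an affine map $\Rbb\to\Rbb^k$, whose coordinates are affine and hence CPWL with no breakpoints. For the inductive step, write a depth-$(L{+}1)$ network as $g=A\circ\sigma\circ g'$, where $g':\Rbb\to\Rbb^m$ is a depth-$L$ network, $\sigma$ is coordinatewise ReLU, and $A:\Rbb^m\to\Rbb^k$ is affine. By the inductive hypothesis each $g'_i$ is CPWL; by closure property (ii) each $(\sigma\circ g')_i=\max(g'_i,0)$ is CPWL; and by closure property (i) each $g_j=\sum_i A_{ji}(\sigma\circ g')_i+b_j$ is CPWL. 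Taking $k=1$ in the conclusion yields the lemma.

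The argument is essentially bookkeeping, so I do not anticipate a genuine obstacle. The one place to be careful is the finiteness claim inside closure property (ii): ensuring that on an interval where the intermediate affine function is identically zero we do not spuriously generate infinitely many breakpoints, and that we correctly merge and re-sort the breakpoints contributed by different coordinates when forming linear combinations in property (i). (An alternative would be to invoke the known fact that ReLU networks compute exactly the CPWL functions, but the short inductive argument above is self-contained and suffices here; the layer-norm refinement needed for Proposition \ref{prop:multiple-piecewise-linear} is then handled separately by the companion lemma.)
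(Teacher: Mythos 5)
Your proof is correct and follows essentially the same route as the paper's: the paper argues layer by layer that pre-activations (linear combinations of piecewise linear continuous functions) and ReLU compositions remain piecewise linear continuous, which is exactly your inductive argument with the two closure properties made explicit. Your version merely adds the breakpoint-finiteness bookkeeping that the paper leaves implicit.
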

\begin{proof}
Each pre-activation node is a linear combination of piecewise linear continuous functions, and is thus piecewise linear continuous. Each activation applies ReLU, which is piecewise linear and the composition of piecewise linear continuous functions is also piecewise linear continuous. The output is a pre-activation and is thus piecewise linear continuous.
\end{proof}

\begin{lemma}\label{lemma:layer-norm-preserves-piecewise}
    Adding layer normalization to a ReLU MLP $f:\Rbb\rightarrow\Rbb$ preserves piecewise linearity.
\end{lemma}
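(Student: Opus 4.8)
The plan is to argue by peeling off the layer–normalization operations one at a time. By Lemma~\ref{lemma:relu-mlp-piecewise-linear} the purely linear-plus-ReLU portion of the network lying between two consecutive normalizations is piecewise linear and continuous, and compositions of piecewise linear continuous maps are again piecewise linear continuous; so it suffices to establish the single claim: if $z:\Rbb\to\Rbb^{k}$ is piecewise linear and continuous, then so is $x\mapsto\mathrm{LN}(z(x))$. The statement for the whole network then follows by induction on the number of normalization layers, alternately invoking this claim and Lemma~\ref{lemma:relu-mlp-piecewise-linear}, and Proposition~\ref{prop:multiple-piecewise-linear} follows from it by reading off the $d$ output coordinates.

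To prove the claim I would first partition $\Rbb$ into the finitely many maximal intervals on which every coordinate of $z$ is affine, writing $z(x)=ax+b$ on such an interval with $a,b\in\Rbb^{k}$ fixed. On that interval
\[
\mathrm{LN}(z(x))=\gamma\odot\frac{z(x)-\mu(x)\mathbf 1}{\sigma(x)}+\beta,\qquad \mu(x)=\tfrac1k\mathbf 1^{\top}z(x),\quad \sigma(x)=\sqrt{\tfrac1k\lVert z(x)-\mu(x)\mathbf 1\rVert^{2}+\epsilon}.
\]
The centered numerator equals $\gamma\odot\bigl((a-\bar a\mathbf 1)x+(b-\bar b\mathbf 1)\bigr)$, which is affine in $x$ on the interval, so the only ingredient that could destroy piecewise linearity is the denominator $\sigma(x)$.

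That division by $\sigma(x)$ is the main obstacle, since $\sigma(x)^{2}$ is a quadratic in $x$ and hence $\mathrm{LN}(z(x))$ is, a priori, an affine function divided by the square root of a quadratic. The resolution I would use is to observe that in the RPE the normalization statistics do not depend on the scalar argument at which $f$ is being evaluated — e.g.\ the normalization is taken over the index set of (relative) positions rather than over hidden features, so that for a fixed sequence $\mu$ and $\sigma$ are constants and $\mathrm{LN}$ acts as a fixed affine map $z\mapsto\gamma\odot\frac{z-\mu\mathbf 1}{\sigma}+\beta$; a centering-only normalization would serve the same purpose. In that regime $\mathrm{LN}(z(x))$ is genuinely affine in $x$ on each of the finitely many intervals, and continuity across the interval endpoints is inherited from continuity of $z$ composed with the (continuous) affine map, so $x\mapsto\mathrm{LN}(z(x))$ is piecewise linear and continuous. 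Feeding this back into the next linear-plus-ReLU block and applying Lemma~\ref{lemma:relu-mlp-piecewise-linear} once more closes the induction and proves the lemma.
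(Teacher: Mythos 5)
Your reduction reaches the same conclusion as the paper, but by a noticeably more careful route. The paper's proof is one line: it asserts that layer normalization "applies the same affine transformation to each node in a layer," i.e.\ it silently treats $\mathrm{LN}$ as a fixed affine map, and then invokes closure of piecewise linear continuous functions under affine maps and composition. You instead partition $\Rbb$ into the maximal intervals of affinity, observe that the centered numerator is affine there, and isolate the real obstruction: for standard per-input layer norm over the hidden features, $\sigma(x)$ is the square root of a piecewise quadratic in $x$, so $\mathrm{LN}(z(x))$ is a priori affine-over-$\sqrt{\text{quadratic}}$ and \emph{not} piecewise linear. This is a legitimate point that the paper's argument glosses over entirely; its claim is only valid under the assumption you make explicit, namely that the normalization statistics $\mu,\sigma$ are constants with respect to the scalar argument, so that $\mathrm{LN}$ really is a fixed affine map. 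The one caveat is that your justification for that assumption is an architectural claim (that the RPE's normalization is computed over the position axis, or is centering-only) which you assert rather than verify; if the RPE's layer norm is the usual per-position normalization over hidden features, your resolution — and equally the paper's one-liner — does not apply as stated, and the lemma would need $\mu,\sigma$ to be treated as fixed (e.g.\ frozen statistics or scale-and-shift only) to hold. So: same underlying mechanism as the paper (LN acts affinely), but your version makes the hidden hypothesis visible, which is a genuine improvement in rigor provided the architectural claim is checked.
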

\begin{proof}
Layer normalization applies the same affine transformation to each node in a layer. Since an affine transformation of a piecewise linear continuous function is still piecewise linear continuous, adding layer normalization to an MLP preserves piecewise linear continuity.
\end{proof}
\multiplepiecewiselinear*
\begin{proof}
    Follows immediately from Lemmas \ref{lemma:relu-mlp-piecewise-linear} and \ref{lemma:layer-norm-preserves-piecewise}.
\end{proof}

\section{Proofs for Matrix Approximation Error Spectral Norm}

\subsection{Proof of Theorem \ref{thm:ski-error-bound}}\label{proof:ski-error-bound}
\skierrorbound*
\begin{proof}
We first decompose the difference between the SKI approximation and the optimal rank $r$ approximation into the sum of two terms: the difference between the SKI and the Nystr{\"o}m approximations, and the difference between the Nystr{\"o}m and optimal rank $r$ approximations.
\begin{align*}
    \E_{SKI}&=\W\A\W^\top-\T_{r,opt}\\
    &=\W\A\W^\top-\F\A^{-1}\B+\F\A^{-1}\B-\T_{r,opt}\\
    &=\W\A\W^\top-\F\A^{-1}\B+\E_{nyst}
\end{align*}
so that
\begin{align*}
    \Vert \E_{SKI}\Vert_2 &\leq \Vert \W\A\W^\top-\F\A^{-1}\B\Vert_2 +\Vert \E_{nyst}\Vert_2
\end{align*}


We need to bound $\Vert \W\A_M\W^\top-\F\A^{-1}\B\Vert_2$, the operator norm of the difference between the SKI and the Nystr{\"o}m approximations.
\begin{align}
    &\Vert \W \A\A^{-1}\A\W^\top-\F\A^{-1}\B\Vert_2 \nonumber\\
    &\qquad=\Vert \W\A\A^{-1}\A\W^\top-\F\A^{-1}\A\W^\top+\F\A^{-1}\A\W^\top-\F\A^{-1}\B\Vert_2\nonumber\\
    &\qquad\leq \Vert \W\A-\F\Vert_2 \Vert \W^\top \Vert_2+\Vert \F \A^{-1}\Vert_2 \Vert \A \W^\top -\B\Vert_2\nonumber\\
    &\qquad\leq \sigma_1(\W)\Vert \W\A-\F\Vert_2 +\frac{\sigma_1(\F)}{
    \sigma_{r}(\A)
    } \Vert \A \W^\top -\B\Vert_2.\label{eqn:initial-ski-bound}
\end{align}


The first term describes the error due to approximation of $\F$, the left Nystr{\"o}m factor, while the second term describes the error due to approximation of $\B$, the right one. We can use standard interpolation results to bound $\Vert \W\A-\F\Vert_2$ and $\Vert \A \W^\top -\B\Vert_2$. Recall that the left Nystr{\"o}m factor and inducing Gram matrix have terms
\begin{align*}
    \F_{ij}&=k(i,p_j)\\
    \A_{ij}&=k(p_i,p_j),
\end{align*}
so that $(\W\A)_{ij}=\tilde{k}(i,p_j)$ approximates $\F_{ij}=k(i,p_j)$ using interpolation. For linear interpolation this is
\begin{align*}
    \tilde{k}(i,p_j)= w_i k(p_A,p_j)+(1-w_i)k(p_B,p_j).
\end{align*}
where $p_A,p_B$ are the two closest inducing points to $i$. More generally with polynomial interpolation of degree $N$ we use $p_{n_1},\ldots,p_{n_N}$ to denote the $N$ closest inducing points to $i$. Using the Lagrange error formula, polynomial interpolation has the following error bound \cite{math563polynomial}
\begin{align*}
    \vert \tilde{k}(i,p_j)-k(i,p_j)\vert&\leq \left \vert \frac{\psi_N(i)}{(N+1)!}\right\vert\max_{p_{n_1}\leq x\leq p_{n_N}}\left\vert\frac{\partial^{N+1} }{\partial x^{N+1}}k(x,p_j)\right\vert
\end{align*}
where $\psi_N(i)=\prod_{j=1}^N(i-p_{n_j})$. As an example, for linear interpolation this gives
\begin{align*}
    \vert \tilde{k}(i,p_j)-k(i,p_j)\vert&\leq \left\vert\frac{(i-p_A)(i-P_B)}{2}\right\vert\max_{p_A\leq x\leq p_B}\left\vert\frac{\partial^2 }{\partial x^2}k(x,p_j)\right\vert\\
    &\leq \frac{h^2}{8}\max_{p_A\leq x\leq p_B}\left\vert\frac{\partial^2 }{\partial x^2}k(x,p_j)\right\vert,
\end{align*}
where $h=p_B-p_A$ is the distance between any two neighboring inducing points. Note that we assumed the $N+1$th partial is continuous and since we are interested in $k$ on a compact domain, the $N+1$th partial is bounded, say by $L$. Thus,
\begin{align*}
    \vert \tilde{k}(i,p_j)-k(i,p_j)\vert&\leq \left \vert \frac{\psi_N(i)}{(N+1)!}\right\vert L\\
    \Rightarrow(\tilde{k}(i,p_j)-k(i,p_j))^2&\leq \left ( \frac{\psi_N(i)}{(N+1)!}\right)^2L^2
\end{align*}
and thus we can bound the error in the Frobenius norm of the left factor's SKI approximation as
\begin{align*}
    \Vert \W\A-\F\Vert_F^2&\leq nr \max_{p_{n_1}\leq i\leq p_{n_N}}\left ( \frac{\psi_N(i)}{(N+1)!}\right)^2L^2\\
    \Rightarrow\Vert \W\A-\F\Vert_F&\leq \sqrt{nr}\max_{p_{n_1}\leq i\leq p_{n_N}}\frac{\vert \psi_N(i)\vert}{(N+1)!} L.
\end{align*}
This implies an operator norm bound
\begin{align*}
    \Vert \W\A-\F\Vert_2 &\leq \Vert \W\A-\F\Vert_F\\
    &\leq \sqrt{nr}\max_{p_{n_1}\leq i\leq p_{n_N}}\frac{\vert \psi_N(i)\vert}{(N+1)!} L.
\end{align*}
The right factor approximation $\Vert \A \W^\top -\B\Vert_2$ has the same bound. Plugging into Eqn. \ref{eqn:initial-ski-bound}, we have
\begin{align*}
    \Vert \W \A\A^{-1}\A\W^\top-\F\A^{-1}\B\Vert_2&\leq \sqrt{nr}\max_{p_{n_1}\leq i\leq p_{n_N}}\frac{\vert \psi_N(i)\vert}{(N+1)!}L\left(\sigma_1(\W) +\frac{\sigma_1(\F)}{
    \sigma_{s}(\A)
    }\right)
\end{align*}
which gives
\begin{align*}
    \Vert \E_{SKI}\Vert_2 &\leq \sqrt{nr}\max_{p_{n_1}\leq i\leq p_{n_N}}\frac{\vert \psi_N(i)\vert}{(N+1)!}L\left(\sigma_1(\W) +\frac{\sigma_1(\F)}{
    \sigma_{r}(\A)
    }\right) +\Vert \E_{nyst}\Vert_2.
\end{align*}
Now recall that
\begin{align*}
    \sigma_1(\W)&=\Vert \W\Vert_2\\
    &\leq \sqrt{n}\Vert \W\Vert_\infty\\
    &\leq (N+1)\sqrt{n}
\end{align*}
since $\W$ has at most $N+1$ non-zero entries in each row \ifx\textcolor{magenta}{feels like we should be able to bound this tighter (maximum absolute row sum)}\fi, so that
\begin{align*}
    \Vert \E_{SKI}\Vert_2 &\leq \sqrt{nr}\max_{p_{n_1}\leq i\leq p_{n_N}}\frac{\vert \psi_N(i)\vert}{(N+1)!}L\left((N+1)\sqrt{n} +\frac{\sigma_1(\F)}{
    \sigma_{r}(\A)
    }\right) +\Vert \E_{nyst}\Vert_2.
\end{align*}
Note that we could have alternatively expanded Eqn. \ref{eqn:initial-ski-bound} using terms based on $\B$ instead of $\F$. This gives
\begin{align}
    &\Vert \W \A\A^{-1}\A\W^\top-\F\A^{-1}\B\Vert_2 \nonumber\\
    &\qquad=\Vert \W\A\A^{-1}\A\W^\top-\W \A \A^{-1}\B+\W\A\A^{-1}\B-\F\A^{-1}\B\Vert_2\nonumber\\
    &\leq \Vert \W\Vert_2 \Vert \A\W^\top  -\B\Vert_2+\Vert \W\A-\F\Vert_2\Vert \A^{-1}\B\Vert_2\nonumber\\
    &\leq \sigma_1(\W)\Vert \A\W^\top  -\B\Vert_2 +\frac{\sigma_1(\B)}{
    \sigma_{r}(\A)
    } \Vert \W\A-\F\Vert_2.\label{eqn:initial-ski-bound-alternate}.
\end{align}
Using Eqn. \ref{eqn:initial-ski-bound-alternate} instead of Eqn. \ref{eqn:initial-ski-bound} and taking the min of both results leads to a bound of
\begin{align*}
    \Vert \E_{SKI}\Vert_2 &\leq \sqrt{nr}\max_{p_{n_1}\leq i\leq p_{n_N}}\frac{\vert \psi_N(i)\vert}{(N+1)!}L\left((N+1)\sqrt{n} +\frac{\min(\sigma_1(\F),\sigma_1(\B))}{
    \sigma_{r}(\A)
    }\right) +\Vert \E_{nyst}\Vert_2.
\end{align*}

\end{proof}


\section{Smoothness and Decay}
\subsection{GeLU: Proofs Related to Theorem \ref{thm:gelu-mlp-dtft-signal-decay}}\label{sec:appendix-gelu}
We analyze how modeling the DTFT with a GeLU MLP affects smoothness, the strongest form being an \textit{entire} function, which is complex differentiable everywhere. We then analyze what this implies for the signal. We first recap three basic definitions from complex analysis. In Lemmas \ref{lemma:complex-extension-gelu-activation} and \ref{lemma:gelu-mlp-layer-norm-entire}, we show GeLU MLPs are entire. In \ref{prop:dtft-entire-signal-decay} we show that if a DTFT is entire then the signal will decay at faster than any exponential rate. Finally in Theorem \ref{thm:gelu-mlp-dtft-signal-decay}, we show that modeling the DTFT with a GeLU MLP implies that the signal will decay faster than any exponential rate.
\begin{definition}
    The \textbf{complex derivative} of $f:\mathbb{C}\rightarrow \mathbb{C}$ at $z_0\in \mathbb{C}$ is defined as
    \begin{align*}
        f'(z_0)&=\lim_{z\rightarrow z_0}\frac{f(z)-f(z_0)}{z-z_0}.
    \end{align*}
\end{definition}
\begin{definition}
    A function $f:\mathbb{C}\rightarrow \mathbb{C}$ is \textbf{holomorphic} at $z_0 \in \mathbb{C}$ if it is differentiable on a neighborhood of $z_0$.
\end{definition}
\begin{definition}
    A function is \textbf{entire} if it is holomorphic on $\mathbb{C}$.
\end{definition}

\begin{lemma}\label{lemma:complex-extension-gelu-activation}
    The complex extension of the GeLU activation function is entire.
\end{lemma}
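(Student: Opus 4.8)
The plan is to recall the definition of GeLU, $\mathrm{GeLU}(x) = x\,\Phi(x)$ where $\Phi$ is the standard Gaussian CDF, and to argue that the complex extension $z \mapsto z\,\Phi(z)$ is entire by exhibiting $\Phi$ as an entire function and then invoking closure of entire functions under multiplication. First I would write $\Phi(z) = \tfrac12\bigl(1 + \operatorname{erf}(z/\sqrt{2})\bigr)$, so it suffices to show the error function $\operatorname{erf}(z) = \tfrac{2}{\sqrt{\pi}}\int_0^z e^{-t^2}\,dt$ extends to an entire function of $z$. Then $z\,\Phi(z)$ is a product of the entire function $z$ and the entire function $\Phi(z)$, hence entire (the algebra of entire functions is closed under finite sums and products).

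The core step is establishing that $\operatorname{erf}$ is entire. I would do this via power series: since $e^{-t^2} = \sum_{k=0}^\infty \frac{(-1)^k t^{2k}}{k!}$ converges uniformly on compact subsets of $\mathbb{C}$, it may be integrated term by term along any path from $0$ to $z$, giving
\begin{align*}
    \operatorname{erf}(z) &= \frac{2}{\sqrt{\pi}}\sum_{k=0}^\infty \frac{(-1)^k z^{2k+1}}{k!\,(2k+1)}.
\end{align*}
A ratio test on the coefficients shows this series has infinite radius of convergence, so it defines an entire function. (Alternatively, one can note that $e^{-t^2}$ is entire in $t$, so its primitive along any contour defines an entire function, path-independence following from Cauchy's theorem since $\mathbb{C}$ is simply connected.) Hence $\Phi$ is entire, and therefore so is $z\,\Phi(z) = \mathrm{GeLU}(z)$.

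I do not expect a genuine obstacle here; the only subtlety worth stating carefully is that $\operatorname{erf}$ on $\mathbb{C}$ is \emph{not} bounded (unlike its behavior on $\mathbb{R}$), but boundedness is irrelevant to being entire — we only need holomorphy on all of $\mathbb{C}$, which the globally convergent power series delivers. This lemma then feeds into Lemma \ref{lemma:gelu-mlp-layer-norm-entire}, where entirety of the activation is combined with the fact that affine maps and layer normalization (away from the measure-zero set where the normalization denominator vanishes, or under the usual $\epsilon$-regularized definition) preserve entirety, so that a full GeLU MLP is entire; that in turn drives the super-exponential decay conclusion of Theorem \ref{thm:gelu-mlp-dtft-signal-decay} via Proposition \ref{prop:dtft-entire-signal-decay}.
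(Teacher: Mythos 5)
Your proposal is correct and follows essentially the same route as the paper's proof: write $\mathrm{GeLU}(z)=z\,\Phi(z)$ with $\Phi(z)=\tfrac12\bigl(1+\operatorname{erf}(z/\sqrt{2})\bigr)$, observe $\operatorname{erf}$ is entire, and conclude by closure of holomorphic functions under composition and products. The only difference is that you supply the power-series argument for entirety of $\operatorname{erf}$ explicitly, whereas the paper cites this as a known fact.
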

\begin{proof}
The GeLU activation function is $x \Phi(x)$, where $\Phi(x)$ is the standard normal CDF. The complex extension is thus $z\Phi(z)$. Recall that
\begin{align*}
    \Phi(z)&=\frac{1+\textrm{Erf}(z/\sqrt{2})}{2}
\end{align*}
where $\textrm{Erf}$ is the error function. Clearly $z/\sqrt{2}$ is holomorphic on $\mathbb{C}$. It is well known that $\textrm{Erf}$ is holomorphic on $\mathbb{C}$ (see \cite{203920} for proof) and compositions of holomorphic functions are holomorphic. Thus $\Phi(z)$ is holomorphic. Finally, the product of holomorphic functions is holomorphic, so that $z\Phi(z)$ is. Since all of this was holomorphic on $\mathbb{C}$, the complex extension of the GeLU activation function is entire.
\end{proof}

\begin{lemma}\label{lemma:gelu-mlp-layer-norm-entire}
    Each output node of a GeLU MLP with layer norm is an entire function.
\end{lemma}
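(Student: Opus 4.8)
## Proof Proposal for Lemma \ref{lemma:gelu-mlp-layer-norm-entire}

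The plan is to build up the MLP layer by layer, showing at each stage that every pre-activation and post-activation node is an entire function of the scalar input $z$. The argument is essentially structural induction on the depth of the network, using the fact that entire functions form a ring closed under composition, together with Lemma \ref{lemma:complex-extension-gelu-activation}, which already gives that the (complex-extended) GeLU activation is entire.

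First I would set up the base case: the input node is the identity map $z\mapsto z$, which is trivially entire. For the inductive step, suppose every node in layer $\ell$ is an entire function of $z$. Each pre-activation node in layer $\ell+1$ is an affine combination $\sum_j a_j g_j(z) + b$ of the layer-$\ell$ node functions $g_j$, and since finite sums and scalar multiples of entire functions are entire, each such pre-activation is entire. Applying the GeLU activation composes each pre-activation with the entire function $w\mapsto w\Phi(w)$ from Lemma \ref{lemma:complex-extension-gelu-activation}; as the composition of entire functions is entire, each post-activation node in layer $\ell+1$ is entire. The output nodes are pre-activations of the final layer (no output activation), so they are entire as well.

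The one genuinely non-routine point is handling layer normalization, which is where I expect the main obstacle to lie, since layer norm in general involves dividing by a standard deviation and division is not an entire operation. The resolution, consistent with how Proposition \ref{prop:multiple-piecewise-linear} treats layer norm in the scalar-input setting, is that for an MLP mapping from $\Rbb$ (a one-dimensional input), the ``layer'' over which normalization occurs, when it is applied per-node along the input dimension or in the degenerate scalar case, reduces to an \emph{affine} transformation with constant coefficients rather than an input-dependent rescaling; more precisely, layer normalization applies the same affine map $x \mapsto \gamma \frac{x - \mu}{\sigma} + \beta$ to each node of a layer, and when the normalization statistics $\mu,\sigma$ are fixed (or the construction is such that this reduces to a fixed affine map, as in the scalar setting used for the RPE), this is an affine function of the node values, hence entire. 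I would state this reduction explicitly, then note that affine maps preserve entireness just as they preserve piecewise linearity in Lemma \ref{lemma:layer-norm-preserves-piecewise}, and fold it into the inductive step.

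Combining these observations, each output node of a GeLU MLP with layer norm is obtained from $z$ by finitely many affine combinations, compositions with the entire GeLU map, and affine layer-norm transformations, all of which preserve entireness; hence each output node is entire, completing the proof. I would keep the write-up short, citing Lemma \ref{lemma:complex-extension-gelu-activation} for the activation and the closure properties of entire functions (sums, products, compositions) as standard facts from complex analysis.
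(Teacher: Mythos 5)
Your proposal is correct and follows essentially the same route as the paper's proof: the paper likewise argues that pre-activations are linear combinations of entire functions, activations are compositions with the entire GeLU map from Lemma \ref{lemma:complex-extension-gelu-activation}, and layer norm is an affine transformation, all of which preserve holomorphy. Your explicit induction and your remark that layer norm must be read as a fixed affine map (rather than an input-dependent division by a standard deviation) only spell out what the paper asserts in one line.
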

\begin{proof}
Linear combinations of holomorphic functions are holomorphic, as are compositions. Pre-activations are linear combinations and activations are compositions. The layer-norms are affine transformations, which are also holomorphic. Thus each output node is an entire function.
\end{proof}

\begin{proposition}\label{prop:dtft-entire-signal-decay}
    If the DTFT is entire then 
    \begin{align*}
        k[n]&=O(\exp(-an))
    \end{align*}
    for \textit{all} $a>0$.
\end{proposition}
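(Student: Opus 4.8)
The plan is to prove this by a contour-shifting argument in the spirit of Paley--Wiener theory, exploiting that $\hat{k}$ extends to an entire, $2\pi$-periodic function of a complex variable. First I would recall that, by the definition of the inverse DTFT, $k[n] = \frac{1}{2\pi}\int_{-\pi}^{\pi}\hat{k}(\omega)e^{i\omega n}\,d\omega$; since $\hat{k}$ is entire it is in particular continuous on $[-\pi,\pi]$, hence in $L^2[-\pi,\pi]$, so this integral representation is legitimate (and $k\in\ell^2$).

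Next I would establish that $\hat{k}$ is $2\pi$-periodic on all of $\mathbb{C}$: it is $2\pi$-periodic on $\mathbb{R}$ because it is a DTFT, so the entire function $z\mapsto \hat{k}(z+2\pi)-\hat{k}(z)$ vanishes on $\mathbb{R}$ and therefore, by the identity theorem, vanishes on $\mathbb{C}$. Then, fixing $a>0$, I would integrate $z\mapsto \hat{k}(z)e^{izn}$ around the rectangle with vertices $-\pi,\ \pi,\ \pi+ia,\ -\pi+ia$. Cauchy's theorem makes the loop integral zero, and the two vertical edges cancel one another because $\hat{k}(z)e^{izn}$ is $2\pi$-periodic (using the complex periodicity of $\hat{k}$ just proved together with $e^{2\pi i n}=1$ for $n\in\mathbb{Z}$). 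Consequently the bottom edge equals the top edge, i.e. $\int_{-\pi}^{\pi}\hat{k}(\omega)e^{i\omega n}\,d\omega = e^{-an}\int_{-\pi}^{\pi}\hat{k}(\omega+ia)e^{i\omega n}\,d\omega$. Bounding the right-hand integrand by $C_a := \sup_{\omega\in[-\pi,\pi]}\lvert \hat{k}(\omega+ia)\rvert$, which is finite since $\hat{k}$ is continuous on the compact segment $[-\pi,\pi]+ia$, gives $\lvert k[n]\rvert \le C_a e^{-an}$, and since $a>0$ was arbitrary this is exactly $k[n]=O(\exp(-an))$ for all $a>0$.

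The steps are all routine once the setup is in place; the only point requiring a little care is justifying the complex $2\pi$-periodicity and the cancellation of the vertical edges, but both follow immediately from the identity theorem and $e^{2\pi i n}=1$. (As an alternative route, one may substitute $w=e^{-iz}$ and observe that entireness of $\hat{k}$ on $\mathbb{C}$ forces the Laurent series $\sum_m k[m]w^m$ to converge on all of $\mathbb{C}\setminus\{0\}$; applying Cauchy--Hadamard to its analytic and principal parts yields $\limsup_m \lvert k[m]\rvert^{1/m}=0$ and $\limsup_m \lvert k[-m]\rvert^{1/m}=0$, which gives the same decay conclusion, in fact with $\lvert n\rvert$ in the exponent.)
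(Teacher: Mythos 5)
Your proof is correct and follows essentially the same route as the paper: the paper rewrites $k[n]$ as a Fourier coefficient of $\hat{k}$ and then simply cites the classical fact that Fourier coefficients of a function analytic in a strip of half-width $a$ are $O(e^{-an})$, whereas your contour-shift argument (with the identity-theorem step establishing $2\pi$-periodicity of the entire extension, which lets the vertical edges cancel) is exactly the standard proof of that cited fact. In effect you have written a self-contained version of the paper's one-line citation, and your Laurent-series remark gives the same conclusion with $\vert n\vert$ in the exponent; no gap.
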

\begin{proof}
Let's consider the Fourier series of $\hat{k}(-\omega)$, which is also entire. Its $n$th coefficient is given by
\begin{align*}
    c_n&=\frac{1}{2\pi}\int_{-\pi}^\pi \hat{k}(-\omega)\exp(-\omega i n)d\omega.
\end{align*}
Let $u=-\omega$; then $du=-d\omega$ and

\begin{align*}
    c_n&=-\frac{1}{2\pi}\int_{-\pi}^\pi \hat{k}(u)\exp(uin)du \\
    &= -k[n].
\end{align*}
Now, Fourier series coefficients for analytic functions in a strip $[-a,a]$ decay as $O(\exp(-an))$.
\end{proof}

\ifx
Note that we can decompose $\hat{k}$ into its even and odd parts (the even part being purely real and the odd being purely imaginary), which are also entire
\begin{align*}
    \hat{k}(\omega)&=\hat{k}_e(\omega)+\hat{k}_o(\omega).
\end{align*}

Then the signal is
\begin{align*}
    k[n]&=\frac{1}{2\pi}\int_{-\pi}^\pi [\hat{k}_e+\hat{k}_o](\omega)\exp(\omega in)d\omega.
\end{align*}
Letting $c_{n,e}$ be the $n$th Fourier coefficient of the even part we have
\begin{align*}
    c_{n,e}&=\frac{1}{2\pi}\int_{-\pi}^\pi \hat{k}_e(\omega )\exp(-\omega in)d\omega\\
    &=-\frac{1}{2\pi}\int_{-\pi}^\pi \hat{k}_e(-u)\exp(uin)du\\
    &=-\frac{1}{2\pi}\int_{-\pi}^\pi \hat{k}_e(u)\exp(uin)du
\end{align*}
and the coefficient for the odd part is
\begin{align*}
    c_{n,o}&=\frac{1}{2\pi}\int_{-\pi}^\pi \hat{k}_o(\omega )\exp(-\omega in)d\omega\\
    &=-\frac{1}{2\pi}\int_{-\pi}^\pi \hat{k}_o(-u)\exp(uin)du\\
    &=\frac{1}{2\pi} \int_{-\pi}^\pi \hat{k}_o(u)\exp(uin)du
\end{align*}
Now Fourier series coefficients for analytic functions in a strip $[-a,a]$ decay as $O(\exp(-an))$, and since $\hat{k}_e$ and $\hat{k}_o$ are entire, $c_{n,e}$ annd $c_{n,o}$ decay as $O(\exp(-an))$ for all $a>0$. We then must have that $\frac{1}{2\pi}\int_{-\pi}^\pi \hat{k}_e(u)\exp(uin)du$ and $\frac{1}{2\pi}\int_{-\pi}^\pi \hat{k}_o(u)\exp(uin)du$ do, so that $k[n]$ does.
\fi

\gelumlpdtftsignaldecay
\begin{proof}
Follows immediately from Lemma \ref{lemma:gelu-mlp-layer-norm-entire} and Proposition \ref{prop:dtft-entire-signal-decay}.
\end{proof}




\subsection{SiLU: Proofs Related to Theorem \ref{thm:silu-mlp-signal-decay}}\label{sec:appendix-silu}

We first argue in Lemma \ref{lemma:silu-C-infinity} that the SiLU activation function is $C^\infty$. We then show in Proposition \ref{prop:silu-mlp-smooth} that SiLU MLPs with layer norm are $C^\infty$ and have integrable derivatives on compact domains. Next in Lemma \ref{lemma:dtft-integrable-function-bounded}, we argue that for an integrable DTFT, its inverse is bounded by a term proportional to the integral of the DTFT. In Proposition \ref{prop:dtft-smoothness-signal-decay}, we use the previous lemma to show that the DTFT being $N$ times differentiable implies a decay rate for the original signal. Finally, we prove our main result, that using a SiLU MLP to model a DTFT leads to faster than any polynomial rate in the time domain.

\begin{lemma}\label{lemma:silu-C-infinity}
    SiLU is $C^\infty$.
\end{lemma}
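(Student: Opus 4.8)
The plan is to unwind the definition of SiLU, namely $\mathrm{SiLU}(x) = x\,\sigma(x)$ with $\sigma(x) = 1/(1+e^{-x})$ the logistic sigmoid, and exhibit it as built from $C^\infty$ pieces via operations that preserve smoothness (composition, sum, product, and quotient by a non-vanishing function). First I would recall the elementary facts: polynomials and $\exp$ are $C^\infty$ on $\mathbb{R}$; sums, products, and compositions of $C^\infty$ functions are $C^\infty$; and if $g$ is $C^\infty$ and nowhere zero then $1/g$ is $C^\infty$.

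Next I would argue that $x \mapsto e^{-x}$ is $C^\infty$ as the composition of the $C^\infty$ maps $x \mapsto -x$ and $\exp$, hence $x \mapsto 1 + e^{-x}$ is $C^\infty$; crucially it is bounded below by $1 > 0$, so it never vanishes and $\sigma(x) = 1/(1+e^{-x})$ is $C^\infty$. Finally, $\mathrm{SiLU}(x) = x\,\sigma(x)$ is the product of the $C^\infty$ function $x \mapsto x$ with the $C^\infty$ function $\sigma$, hence $C^\infty$.

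There is essentially no serious obstacle here; the only point requiring any care is verifying that the denominator $1 + e^{-x}$ is nowhere zero (so that the quotient rule for smoothness applies on all of $\mathbb{R}$), which follows immediately from positivity of the exponential. If a more self-contained argument were desired, one could alternatively note that $\sigma' = \sigma(1-\sigma)$, so by induction every derivative of $\sigma$ is a polynomial in $\sigma$, giving smoothness directly, and then conclude for $\mathrm{SiLU}$ by the Leibniz rule — but the composition/quotient argument above is the cleanest route.
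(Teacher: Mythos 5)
Your proof is correct and follows essentially the same route as the paper: write $\mathrm{SiLU}(x)=x\,\sigma(x)$ and conclude via the product of $C^\infty$ functions. The only difference is that you spell out why $\sigma$ itself is $C^\infty$ (quotient by the nowhere-vanishing $1+e^{-x}$), a fact the paper simply asserts, so your version is just a more detailed rendering of the same argument.
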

\begin{proof}
    The sigmoid function is $C^\infty$, as is the function $x$. The product of $C^\infty$ functions is $C^\infty$.
\end{proof}
\begin{proposition}\label{prop:silu-mlp-smooth}
A SiLU MLP mapping scalars to scalars with layer norm is $C^\infty$ with integrable derivatives on $[-\pi,\pi]$.
\end{proposition}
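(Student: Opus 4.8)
The plan is to build up $C^\infty$ regularity layer by layer, mirroring the inductive structure of the piecewise-linear argument in Lemma~\ref{lemma:relu-mlp-piecewise-linear} but tracking smoothness instead of piecewise linearity. First I would note that the scalar input map $\omega \mapsto \omega$ is $C^\infty$, and that each pre-activation node in the first hidden layer is an affine combination of such maps, hence $C^\infty$. Applying SiLU, which is $C^\infty$ by Lemma~\ref{lemma:silu-C-infinity}, and using that compositions of $C^\infty$ functions are $C^\infty$, every post-activation node of the first layer is $C^\infty$. Iterating this over the layers shows that every pre- and post-activation node is $C^\infty$; since the output carries no activation, it is a final affine combination of $C^\infty$ functions and is therefore $C^\infty$.

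The one step that needs care is layer normalization, which in Lemma~\ref{lemma:layer-norm-preserves-piecewise} only had to be affine for a fixed input, but here must be shown to be $C^\infty$ as a function of the layer's inputs. The mean of a layer's pre-activations is an affine (hence $C^\infty$) function of them; the biased variance is a polynomial in them, hence $C^\infty$; and normalization divides by $\sqrt{\operatorname{var} + \epsilon}$, which is $C^\infty$ because $\operatorname{var} + \epsilon \ge \epsilon > 0$ stays bounded away from $0$ and $t \mapsto \sqrt{t}$ is $C^\infty$ on $(0,\infty)$. The learned scale-and-shift is affine. Composing these, layer normalization is a $C^\infty$ map, so inserting it between layers preserves the $C^\infty$ property above.

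Finally, for integrability of the derivatives on $[-\pi,\pi]$: since the MLP output $\hat{k}$ is $C^\infty$, each derivative $\hat{k}^{(N)}$ is continuous on the compact interval $[-\pi,\pi]$, hence bounded, hence in $L^1[-\pi,\pi]$. In particular $\|\hat{k}^{(N)}\|_1 < \infty$, which is exactly the finiteness that the downstream decay bound in Theorem~\ref{thm:silu-mlp-signal-decay} will invoke.

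I expect the main obstacle, such as it is, to be purely bookkeeping: pinning down the precise form of layer normalization (with its $\epsilon$ regularizer) so that the square root is never evaluated at $0$, after which everything reduces to the standard closure of $C^\infty$ under affine maps, polynomials, SiLU, and composition.
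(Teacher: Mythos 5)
Your proposal is correct and follows essentially the same route as the paper's proof: closure of $C^\infty$ under affine combinations, SiLU compositions, and layer normalization gives smoothness, and then continuity of every derivative on the compact interval $[-\pi,\pi]$ gives boundedness and hence integrability. Your treatment of layer normalization is in fact more careful than the paper's (which simply folds it into ``compositions of $C^\infty$ functions''), since you explicitly verify that dividing by $\sqrt{\operatorname{var}+\epsilon}$ is smooth because the denominator stays bounded away from zero.
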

\begin{proof}
A SiLU MLP with layer norm involves finite linear combinations and finitely many compositions of $C^\infty$ functions, and is thus $C^\infty$. Now any SiLU MLP on a bounded domain has bounded derivatives of all orders (since they are continuous on a bounded domain). Thus, all derivatives are integrable on $[-\pi,\pi]$.
\end{proof}
\begin{lemma}\label{lemma:dtft-integrable-function-bounded}
    If the DTFT $\hat{k}\in L^1[-\pi,\pi]$, then $k$ is bounded and
    \begin{align*}
        \Vert k\Vert_\infty &\leq \frac{1}{2\pi }\Vert \hat{k}\Vert_1
    \end{align*}
\end{lemma}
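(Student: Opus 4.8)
The plan is to directly estimate $|k[n]|$ for each fixed integer $n$ using the inverse DTFT formula and then take a supremum. By the definition of the inverse discrete time Fourier transform stated above,
\begin{align*}
    k[n] &= \mathcal{F}^{-1}\{\hat{k}\}[n] = \frac{1}{2\pi}\int_{-\pi}^\pi \hat{k}(\omega)\exp(i\omega n)\, d\omega,
\end{align*}
which is well defined for every $n$ precisely because $\hat{k}\in L^1[-\pi,\pi]$ (the integrand is dominated in absolute value by $|\hat{k}|$, which is integrable).

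Next I would apply the triangle inequality for integrals together with the observation that $|\exp(i\omega n)| = 1$ for all real $\omega$ and all integers $n$. This yields
\begin{align*}
    |k[n]| &\leq \frac{1}{2\pi}\int_{-\pi}^\pi |\hat{k}(\omega)|\,|\exp(i\omega n)|\, d\omega = \frac{1}{2\pi}\int_{-\pi}^\pi |\hat{k}(\omega)|\, d\omega = \frac{1}{2\pi}\|\hat{k}\|_1.
\end{align*}
Since the right-hand side does not depend on $n$, taking the supremum over $n\in\mathbb{Z}$ gives $\|k\|_\infty \leq \frac{1}{2\pi}\|\hat{k}\|_1 < \infty$, which establishes both that $k$ is bounded and the claimed inequality.

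There is no real obstacle here: the only ingredients are the explicit inverse transform formula, the unit modulus of the complex exponential, and monotonicity/triangle inequality for the integral. The one point worth stating carefully is that the hypothesis $\hat{k}\in L^1[-\pi,\pi]$ is exactly what is needed to guarantee the defining integral converges absolutely for every $n$, so that the bound is not vacuous; this is immediate and requires no further argument.
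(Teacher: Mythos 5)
Your proof is correct and follows essentially the same route as the paper: express $k[n]$ via the inverse DTFT, bound the integral using the triangle inequality and the unit modulus of $\exp(i\omega n)$, and conclude the uniform bound $\Vert k\Vert_\infty \leq \frac{1}{2\pi}\Vert \hat{k}\Vert_1$. No gaps; nothing further is needed.
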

\begin{proof}
This essentially follows the proof technique of Lemma 9.2.3 in \cite{heil2019introduction}, but in the reverse order and using the DTFT instead of the continuous Fourier transform. The idea is to express the signal as the inverse DTFT, which we can since $\hat{k}\in L^1[-\pi,\pi]$, and then use the fact that the values on the complex unit circle have magnitude $1$.
    \begin{align*}
        \vert k[n]\vert&=\left\vert \frac{1}{2\pi }\int_{-\pi}^\pi \hat{k}(\omega)\exp(i\omega n)d\omega \right\vert\\
        &\leq \frac{1}{2\pi }\int_{-\pi}^\pi \vert\hat{k}(\omega)\exp(i\omega n)\vert d\omega \\
        &=\frac{1}{2\pi }\int_{-\pi}^\pi \vert\hat{k}(\omega)\vert d\omega\\
        &=\frac{1}{2\pi }\Vert \hat{k}\Vert_1
    \end{align*}
\end{proof}

The next proposition describes how smoothness of the DTFT implies decay of a time domain signal. While there are many very related results in the literature (for instance, \cite{heil2019introduction} shows the opposite direction for the continuous Fourier transform using a very similar proof technique), we were not able to find exactly this result stated or proven rigorously. Thus we state and prove it.

\begin{proposition}\label{prop:dtft-smoothness-signal-decay}
    If the Nth derivative of DTFT $\hat{k}$ exists and is integrable on $[-\pi,\pi]$ then 
    \begin{align*}
        \vert k[n]\vert &\leq \frac{1}{ 2\pi\vert   n\vert ^N}\Vert \hat{k}^{(N)}\Vert_1
    \end{align*}
    for all $n\neq 0$.
    
\end{proposition}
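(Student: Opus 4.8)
The plan is to express $k[n]$ through the inverse DTFT and then integrate by parts $N$ times, using periodicity to eliminate the boundary terms. Since $\hat{k}^{(N)}$ exists and is integrable on $[-\pi,\pi]$, the lower-order derivatives $\hat{k},\hat{k}',\ldots,\hat{k}^{(N-1)}$ are continuous on the compact interval, hence integrable, so the inversion formula
\[
k[n]=\frac{1}{2\pi}\int_{-\pi}^\pi \hat{k}(\omega)\exp(i\omega n)\,d\omega
\]
is valid, and each integration by parts below is justified.

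For $n\neq 0$, write $\exp(i\omega n)=\frac{1}{in}\frac{d}{d\omega}\exp(i\omega n)$. One integration by parts gives
\[
\int_{-\pi}^\pi \hat{k}(\omega)\exp(i\omega n)\,d\omega
= \frac{1}{in}\Big[\hat{k}(\omega)\exp(i\omega n)\Big]_{-\pi}^{\pi}
- \frac{1}{in}\int_{-\pi}^\pi \hat{k}'(\omega)\exp(i\omega n)\,d\omega .
\]
Because $\hat{k}$ is a DTFT it is $2\pi$-periodic, and $\exp(i\omega n)$ is $2\pi$-periodic for integer $n$, so the boundary term vanishes; the same holds at every subsequent stage since each derivative of $\hat{k}$ is again $2\pi$-periodic. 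Iterating $N$ times,
\[
\int_{-\pi}^\pi \hat{k}(\omega)\exp(i\omega n)\,d\omega
= \left(-\frac{1}{in}\right)^{\!N}\int_{-\pi}^\pi \hat{k}^{(N)}(\omega)\exp(i\omega n)\,d\omega .
\]

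Taking absolute values and using $\vert\exp(i\omega n)\vert=1$ then yields
\[
\vert k[n]\vert
= \frac{1}{2\pi}\,\frac{1}{\vert n\vert^{N}}\left\vert\int_{-\pi}^\pi \hat{k}^{(N)}(\omega)\exp(i\omega n)\,d\omega\right\vert
\le \frac{1}{2\pi \vert n\vert^{N}}\int_{-\pi}^\pi \big\vert \hat{k}^{(N)}(\omega)\big\vert\,d\omega
= \frac{1}{2\pi \vert n\vert^{N}}\big\Vert \hat{k}^{(N)}\big\Vert_1 ,
\]
which is the claimed bound. (Alternatively, one can note that $\hat{k}^{(N)}$ is the DTFT of the signal $(-in)^N k[n]$ and invoke Lemma \ref{lemma:dtft-integrable-function-bounded} directly; the justification of the term-by-term differentiation is the same periodicity/absolute-continuity fact.) The one step I would be most careful about is precisely the vanishing of the boundary terms: it requires $\hat{k},\hat{k}',\ldots,\hat{k}^{(N-1)}$ to be genuinely $2\pi$-periodic and absolutely continuous. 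This is automatic when $\hat{k}$ is the DTFT of a signal, but in our setting $\hat{k}$ is produced by an MLP on $[-\pi,\pi]$ and periodically extended, so I would state explicitly that the periodic extension (and its required derivatives) are continuous, which holds for the smooth activations considered.
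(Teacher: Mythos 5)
Your proof is correct, and it reaches the same intermediate fact as the paper --- that $k[n]$ equals $(i/n)^N$ times the inverse DTFT of $\hat{k}^{(N)}$, hence the $1/(2\pi|n|^N)\Vert\hat{k}^{(N)}\Vert_1$ bound --- but by a different mechanism. The paper works on the series side: it differentiates the DTFT term by term, substitutes the differentiated series into the inversion integral, and uses orthogonality of the complex exponentials (a $\delta[n-m]$ computation) to obtain the recursion $\mathcal{F}^{-1}\{\hat{k}\}[n]=\tfrac{i}{n}\mathcal{F}^{-1}\{\hat{k}'\}[n]$, which it iterates and then closes with Lemma \ref{lemma:dtft-integrable-function-bounded}. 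You instead work purely on the integral side, integrating by parts $N$ times and killing the boundary terms by $2\pi$-periodicity; your final triangle-inequality step reproduces the content of Lemma \ref{lemma:dtft-integrable-function-bounded} inline. The trade-off: the paper's route silently assumes termwise differentiation of the series and a sum--integral interchange are valid, whereas yours needs absolute continuity of $\hat{k},\ldots,\hat{k}^{(N-1)}$ and periodicity of those derivatives, both of which you state and justify (differentiability up to order $N$ gives continuity of the lower derivatives, and periodicity is automatic once $\hat{k}$ is genuinely a DTFT). Your closing caveat is also well taken and goes slightly beyond the paper: in the application where $\hat{k}$ is an MLP defined on $[-\pi,\pi]$ and extended periodically, the boundary terms (equivalently, the paper's termwise manipulation) require the periodic extension and its lower-order derivatives to match at $\pm\pi$, which is worth stating explicitly; as written, the proposition assumes $\hat{k}$ is a DTFT, so this is a concern for the downstream theorems rather than a gap in your argument.
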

\begin{proof}

We first take the derivative of the DTFT
\begin{align*}
    \hat{k}(\omega)&=\sum_{m=-\infty}^\infty x[m]\exp(-i\omega m)\\
    \hat{k}'(\omega)&=\frac{1}{i}\sum_{m=-\infty}^\infty mx[m]\exp(-i\omega m).
\end{align*}

Since $\hat{k}$ is integrable over $[-\pi,\pi]$, we can plug it into the inverse DTFT
\begin{align*}
    \frac{1}{2\pi }\int_{-\pi}^\pi  \hat{k}'(\omega)\exp(i\omega n)&=\frac{1}{2\pi }\int_{-\pi}^\pi \frac{1}{i}\sum_{m=-\infty}^\infty mk[m]\exp(-i\omega m)\exp(i\omega n)d\omega \\
    &=\frac{1}{i}\sum_{m=-\infty}^\infty mk[m]\delta[n-m]\\
    &=\frac{n}{i}k[n]
\end{align*}
so that if $\hat{k}$ and $\hat{k}'$ are integrable, we obtain the key identity relating the inverse DTFTs of a DTFT and its derivative
\begin{align}
    \mathcal{F}^{-1} \{\hat{k}\}&=\frac{i}{n}\mathcal{F}^{-1} \{\hat{k}'\}\label{eqn:inv-fourier-recursion}.
\end{align}
Thus
\begin{align*}
    \vert k[n]\vert &\leq \frac{1}{\vert n\vert}\Big\vert \mathcal{F}^{-1}\{\hat{k}'\}[n]\Big\vert\\
    &\leq \frac{1}{n^2}\Big\vert \mathcal{F}^{-1} \{\hat{k}^{(2)}\}[n]\Big\vert & \textrm{ Eqn. \ref{eqn:inv-fourier-recursion}, since $\hat{k}^{(2)}$ integrable}\\
    &\leq \frac{1}{\vert n\vert^{N}}\Big\vert \mathcal{F}^{-1} \{\hat{k}^{(N)}\}[n]\Big\vert &\textrm{ applying recursively, since $N$th derivative integrable}\\
    &\leq \frac{1}{2\pi\vert n\vert^N}\big\| \hat{k}^{(N)}\big\|_1
\end{align*}
where the last line follows from Lemma \ref{lemma:dtft-integrable-function-bounded}.
\end{proof}

\silumlpdtftsignaldecay*
\begin{proof}
    This follows immediately from Proposition \ref{prop:silu-mlp-smooth} and Proposition \ref{prop:dtft-smoothness-signal-decay}.
\end{proof}

\subsection{Visualizations for Smoothness and Decay}\label{appendix:visualizations-smoothness-decay}

We visualize the frequency responses and the corresponding impulse responses generated by the frequency domain (FD) RPE under the three activation functions for which we have shown theory, with results predicted by theory. For a randomly initialized FD RPE with Gelu activations the impulse responses decay to approximately 0 by $n=30$: this is very rapid decay and the curves visually look like exponential decay. For a randomly initialized SiLU RPE, the resulting impulse responses are similar. For the ReLU case we show the generated filters from a trained FD TNN RPE from one of the TNN layers. We see the impulse responses visually decay to approximately 0 within the finite length of 512 points. This is a slower rate of decay than either of the previous two.

\begin{figure}
    \centering
    \includegraphics[width=0.65\linewidth]{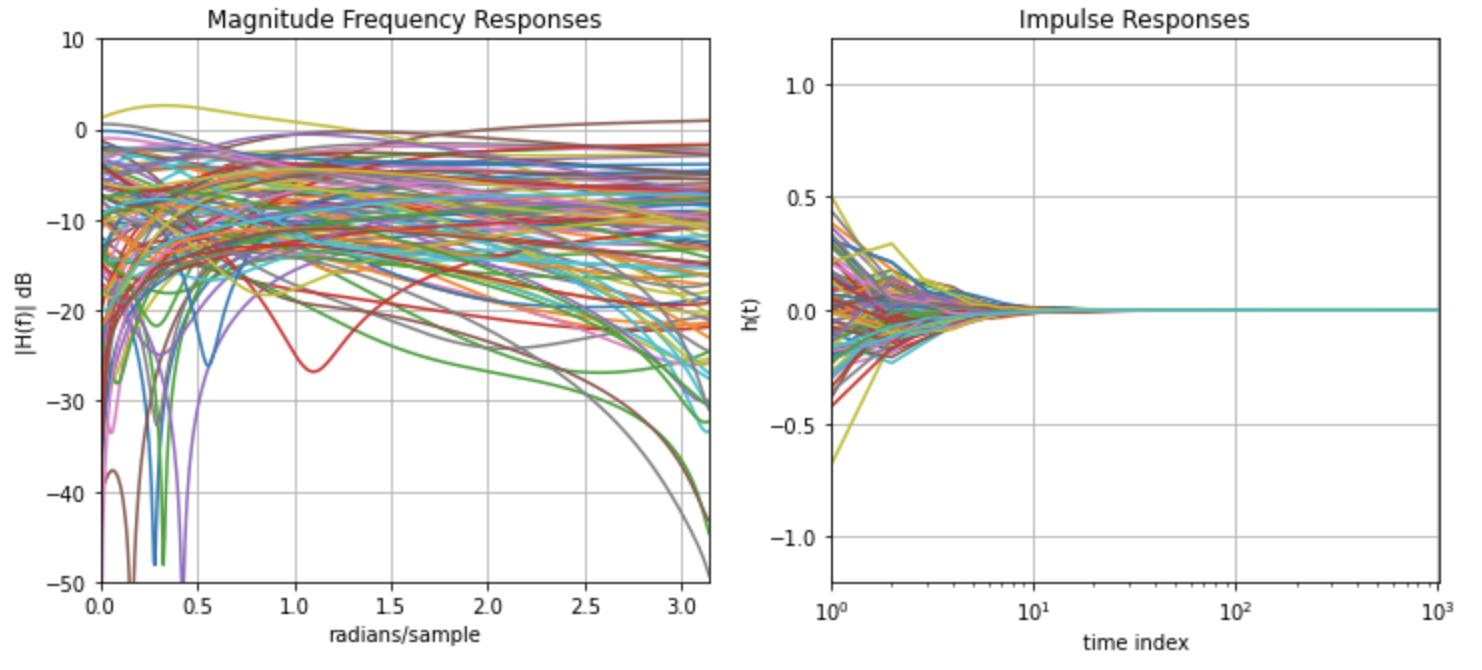}
    \caption{Frequency and impulse responses for a randomly initialized FD RPE MLP with \textbf{GeLU} activations. The curves on the left side are holomorphic, and theory predicts that the curves on the right hand will decay at faster than any exponential rate. They appear to decay approximately exponentially.}
    \label{fig:gelu-smoothness-decay}
\end{figure}

\begin{figure}
    \centering
    \includegraphics[width=0.65\linewidth]{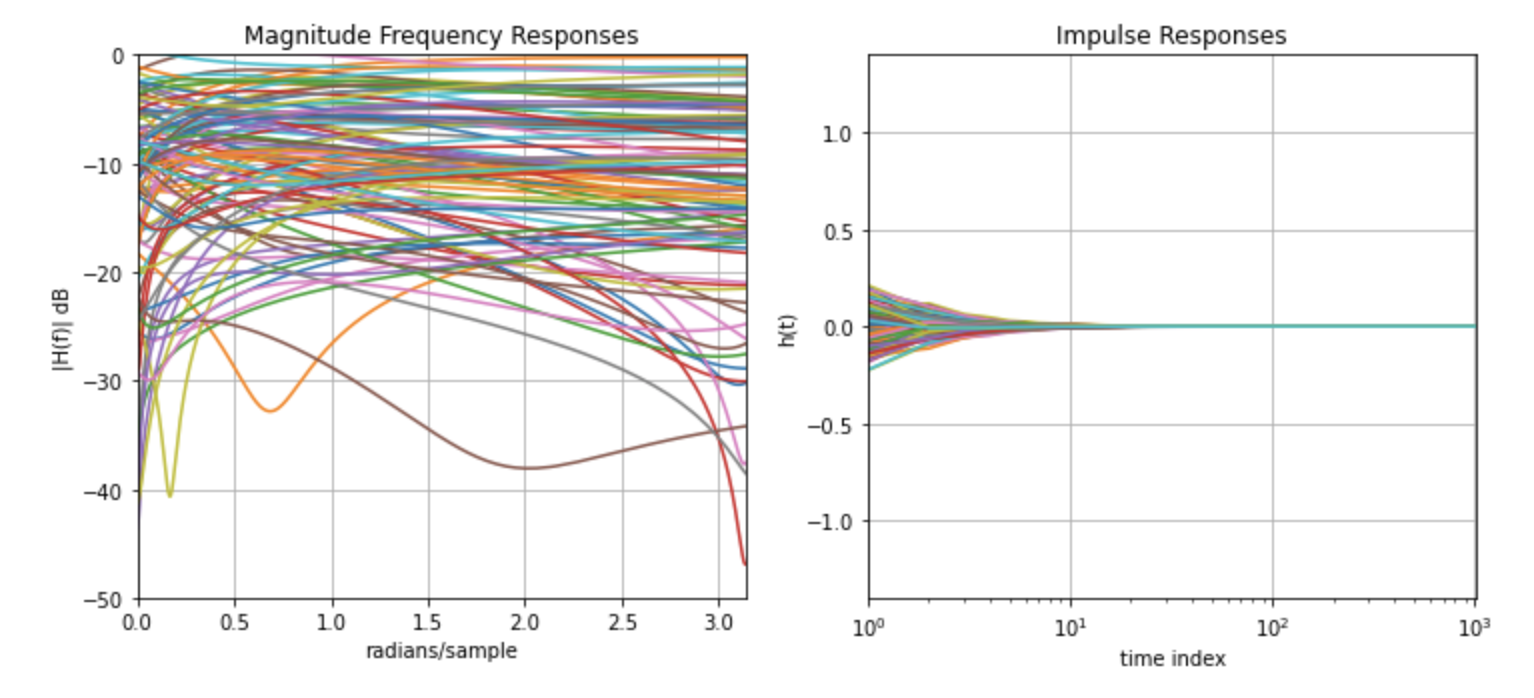}
    \caption{Frequency and impulse response for a randomly initialized FD RPE MLP with \textbf{SiLU} activations. The curves on the left side are $C^\infty$, and theory predicts that the curves on the right will decay at faster than any polynomial rate. They appear visually to have `almost' exponential decay.}
    \label{fig:silu-smoothness-decay}
\end{figure}

\begin{figure}
    \centering
    \includegraphics[width=0.65\linewidth]{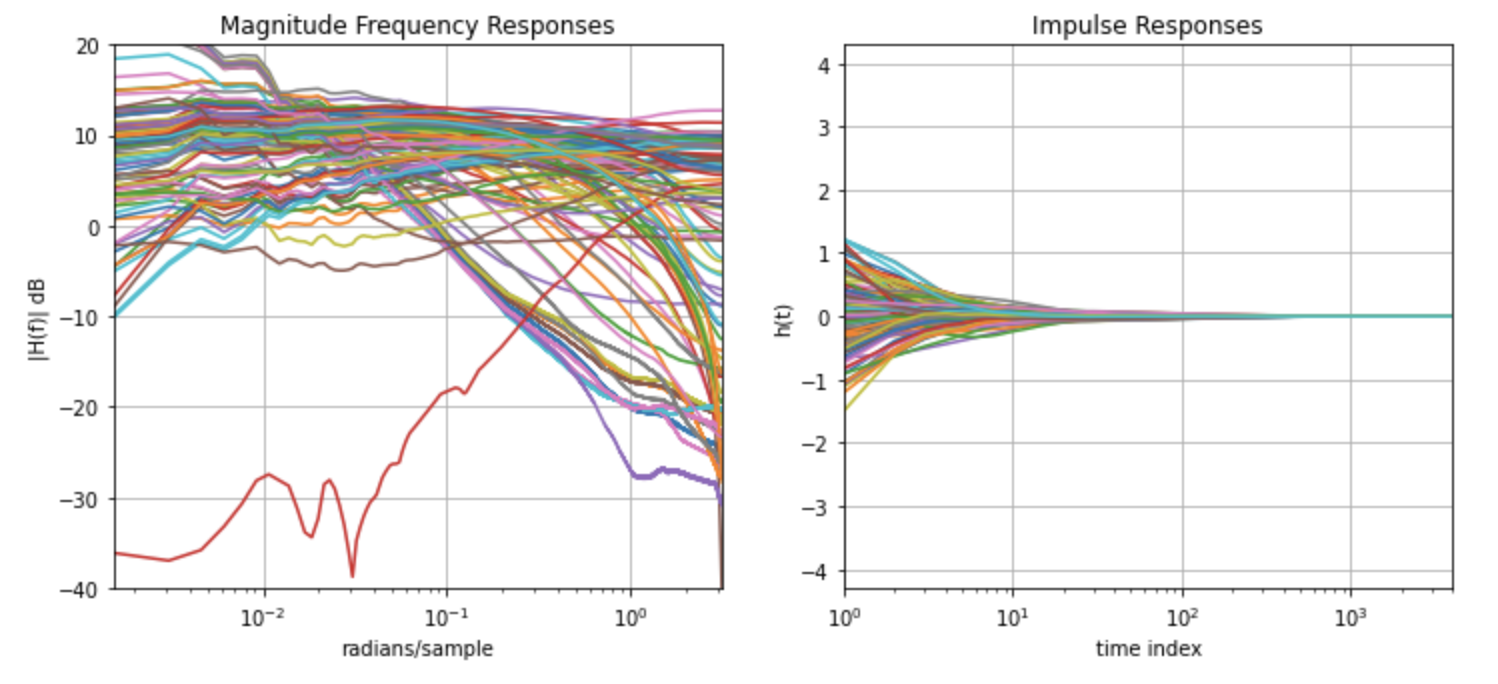}
    \caption{Frequency and impulse responses from an FD RPE MLP with \textbf{ReLU} activations, taken from one layer of a trained FD TNN. The curves on the left are continuous, and theory predicts that the curves on the right will be square summable. They clearly will vanish at infinity, although it is not immediately visually clear at what rate.}
    \label{fig:gelu-smoothness-decay}
\end{figure}

\section{Experiment Details and Additional Results}\label{sec:experimental-details-additional-results}
\subsection{Wikitext-103}

\subsubsection{Fourier Domain (FD-TNN)}\label{sec:fdd-tnn-wikitext}

For both causal and bidirectional models we use the default model and training hyperparameters from the TNN repository \ifx \footref{tnn_repo}\fi as the TNN baseline, defined in the first two columns in \cite{qin2023toeplitz} Table 13: LM (causal) and Roberta (bidirectional). One small HP discrepancy between the repository and table is the use of 7 decoder layers for the causal LM, which we used for all LM experiments, instead of the 6 they had in their paper.  We find that we can reduce the default number of RPE layers from 6 to 3 and improve the speed of the baseline with slight quality improvements. We provide these reproduced perplexity scores for the baseline in parenthesis in Table \ref{tab:wikitext-103-causal}, next to those reported by \cite{qin2023toeplitz}. 
For causal pretraining at a 512 sequence length, FD TNN achieves equivalent perplexity vs inference length as the TNN baseline (see Figure \ref{fig:alm_valid_ppl}a). We achieve between a 5 and 15 \% speed up for the causal case, and a nearly 80 \% speed up in the best case (6 RPE layers) for the bidirectional case. 

\begin{figure}
    \begin{subfigure}[t]{0.45\linewidth}
        \centering
        \includegraphics[width=0.9\linewidth]{figs/alm_extrap_all.png}
        \caption{Wikitext-103 Causal Pretraining.}
    \end{subfigure}
        \begin{subfigure}[t]{0.45\linewidth}
        \centering
        \includegraphics[width=0.9\linewidth]{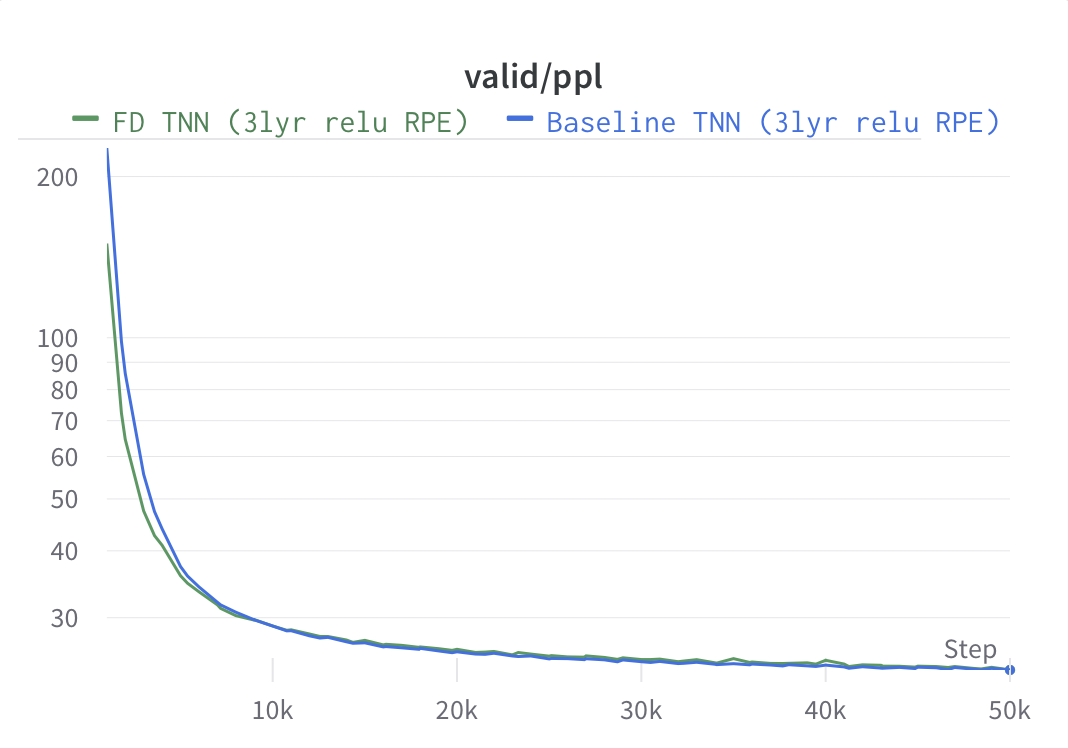}
        \caption{Wikitext-103 Causal Pretraining.}
    \end{subfigure}
    \centering
    \caption{a) In Wikitext-103 causal pretraining, our approach, FD TNN achieves equivalent perplexity vs inference length to TNN. b) Validation Perplexity vs iterations. In the causal setting, FD TNN converges to an equivalent quality at the same rate, but with a 5 to 15\% increase in training speed depending on the RPE MLP depth (see Figure \ref{fig:lra_bubble_and_extrapolation}). For these experiments, we used a learning rate 1e-3 for FD TNN and the default (5e-4) for the baseline.}
    \label{fig:alm_valid_ppl}
\end{figure}

\begin{figure}
    \begin{subfigure}[t]{0.45\linewidth}
        \centering
        \includegraphics[width=0.9\linewidth]{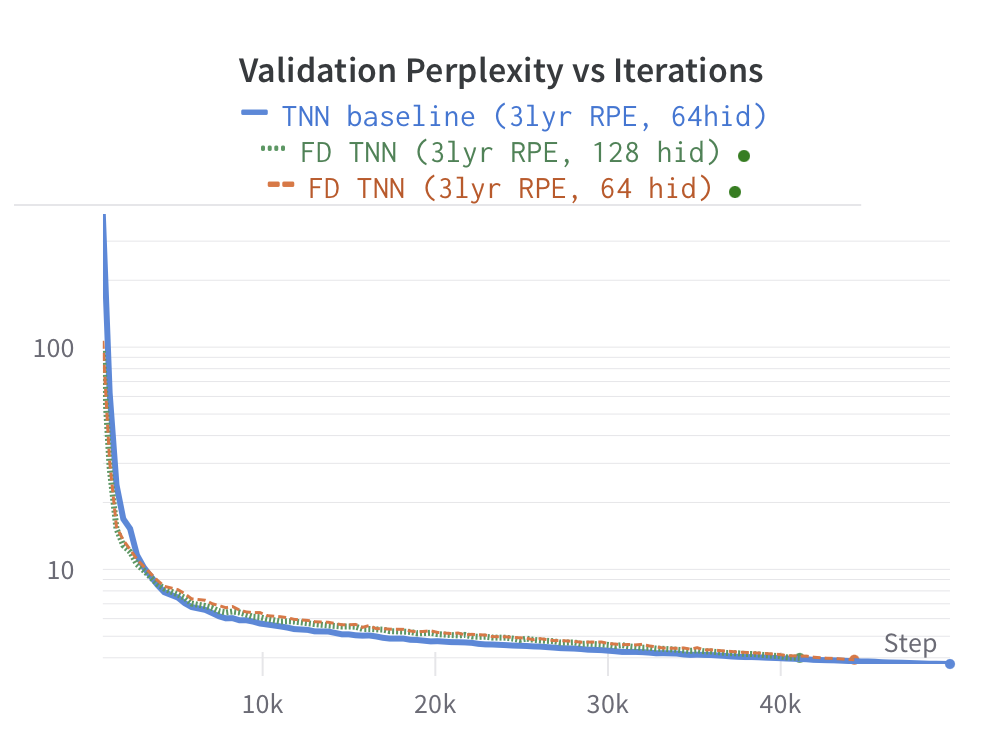}
        \caption{Wikitext-103 Bidirectional Pretraining.}
    \end{subfigure}
        \begin{subfigure}[t]{0.45\linewidth}
        \centering
        \includegraphics[width=0.9\linewidth]{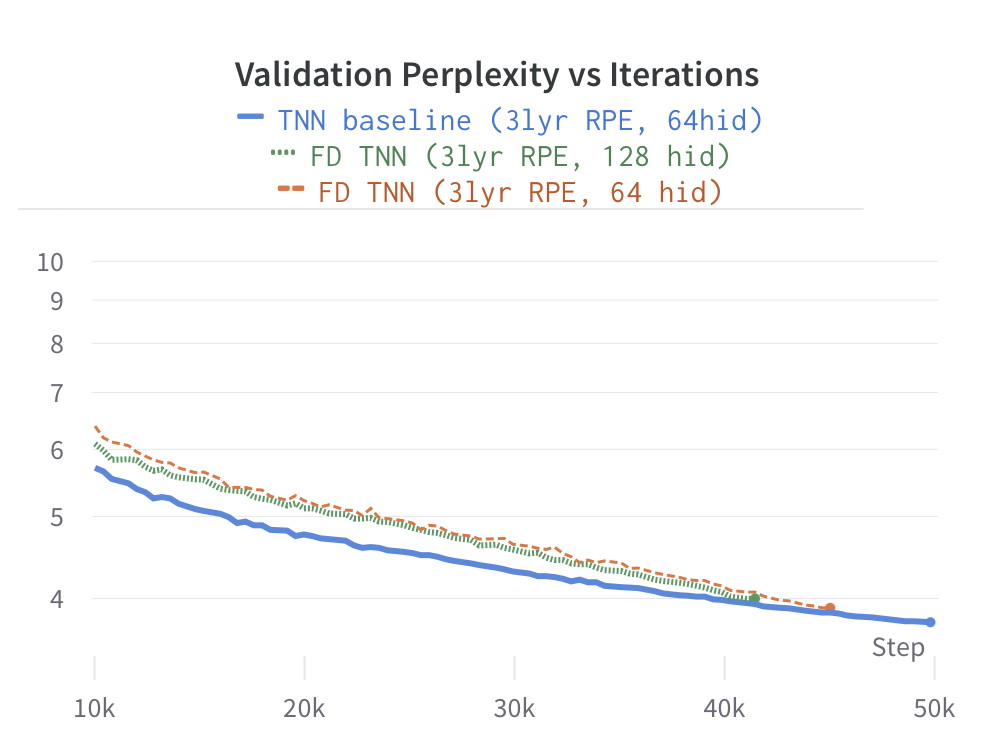}
        \caption{Wikitext-103 Bidirectional Pretraining (close up).}
    \end{subfigure}
    \centering
    \caption{a) In Wikitext-103 bidirectional pretraining, after minimal HP tuning from the default, we observed that FD TNN slightly lags the validation perplexity of the TNN baseline throughout much of the 50k training iterations, but closes this gap during the last 10k iterations. As a result, our 35-80\% speed up in iterations/sec (Figure \ref{fig:extrapolation}) applies to wall clock time assuming one trains for approximately 50k steps. For these results, we used a learning rate of 1e-3 for FD TNN and the default (5e-4) for the baseline. }
    \label{fig:blm_valid_ppl}
\end{figure}

\subsubsection{SKI-TNN}
We first train a bidirectional language model using the MLP free SKI-TNN for 50k steps and compare to the same baseline with a 3 layer RPE MLP (which shows improved model quality vs the 6 layer one) for validation perplexity. We see in Figure \ref{fig:blm_valid_ppl_ski}a) that the loss curves are very similar. In \ref{fig:blm_valid_ppl_ski}b), zooming in we see that both validation perplexities are close to $4$, although SKI TNN has slightly higher perplexity.

We then compare the SKI-TNN wall clock time and memory usage for sequence length 512 and 2048 to the 6 layer RPE TNN baseline, which was used in \cite{qin2023toeplitz}. Figure \ref{fig:blm_timing_memory_ski}a) shows approximately 25\% speedups compared to the baseline for sequence length 512 and 30\% speedups at sequence length 2048. Further, Figure \ref{fig:blm_timing_memory_ski}b) shows that our approach requires approximately 17\% and 42\% less memory for sequence lengths 512 and 2048, respectively.

Finally, we analyze the wall clock time and memory allocation for sparse plus low rank (our primary method) vs low rank only SKI. Figure \ref{fig:blm_timing_memory_ski_low_rank} shows the results. We find that the primary bottleneck in both cases is the low rank component, but that for wall clock time the sparse component (1d conv) still adds substantial overhead.

\begin{figure}
    \begin{subfigure}[t]{0.45\linewidth}
        \centering
        \includegraphics[width=0.9\linewidth]{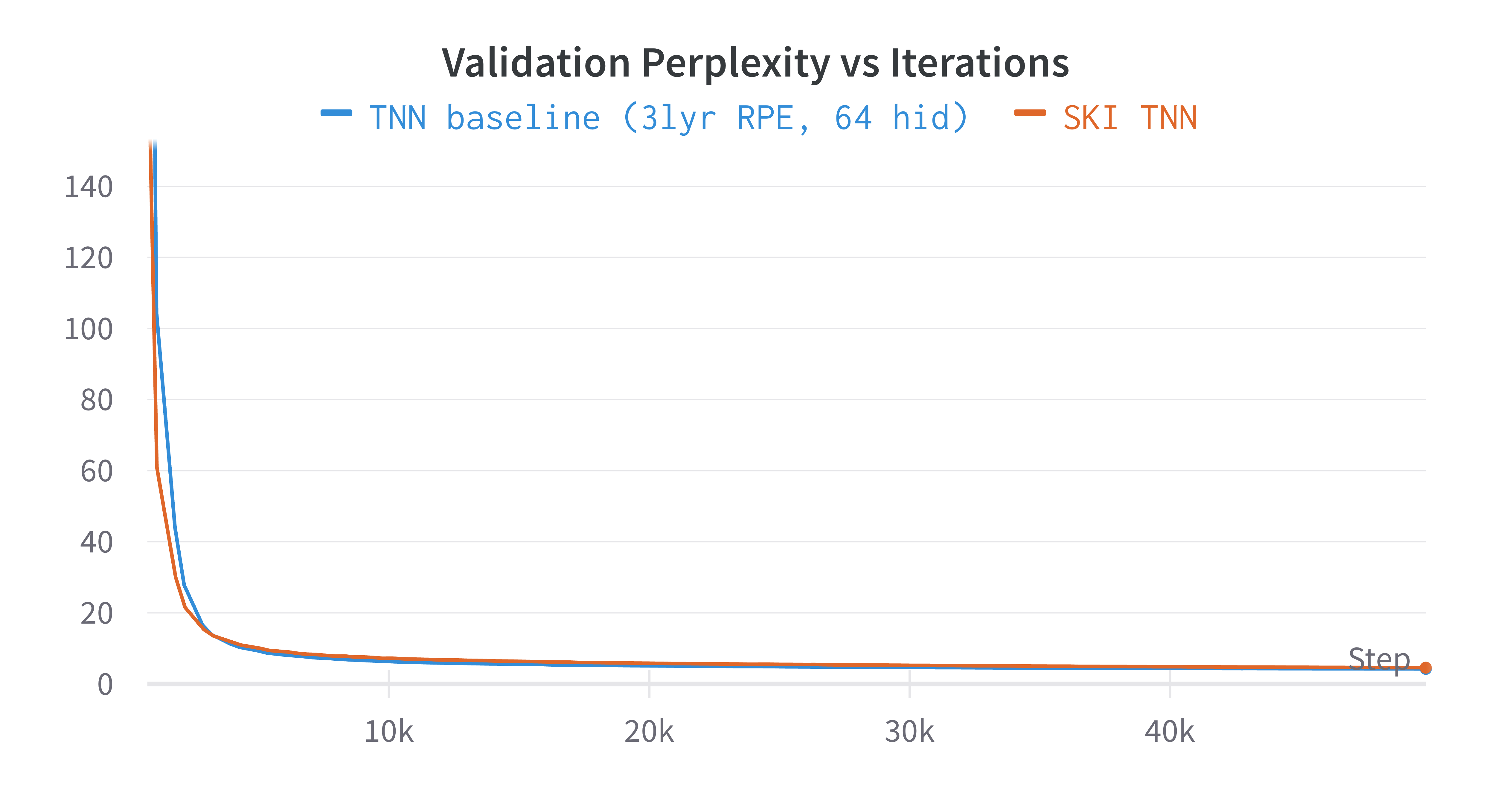}
        \caption{Wikitext-103 Bidirectional Pretraining, SKI.}
    \end{subfigure}
    \begin{subfigure}[t]{0.45\linewidth}
        \centering
        \includegraphics[width=0.9\linewidth]{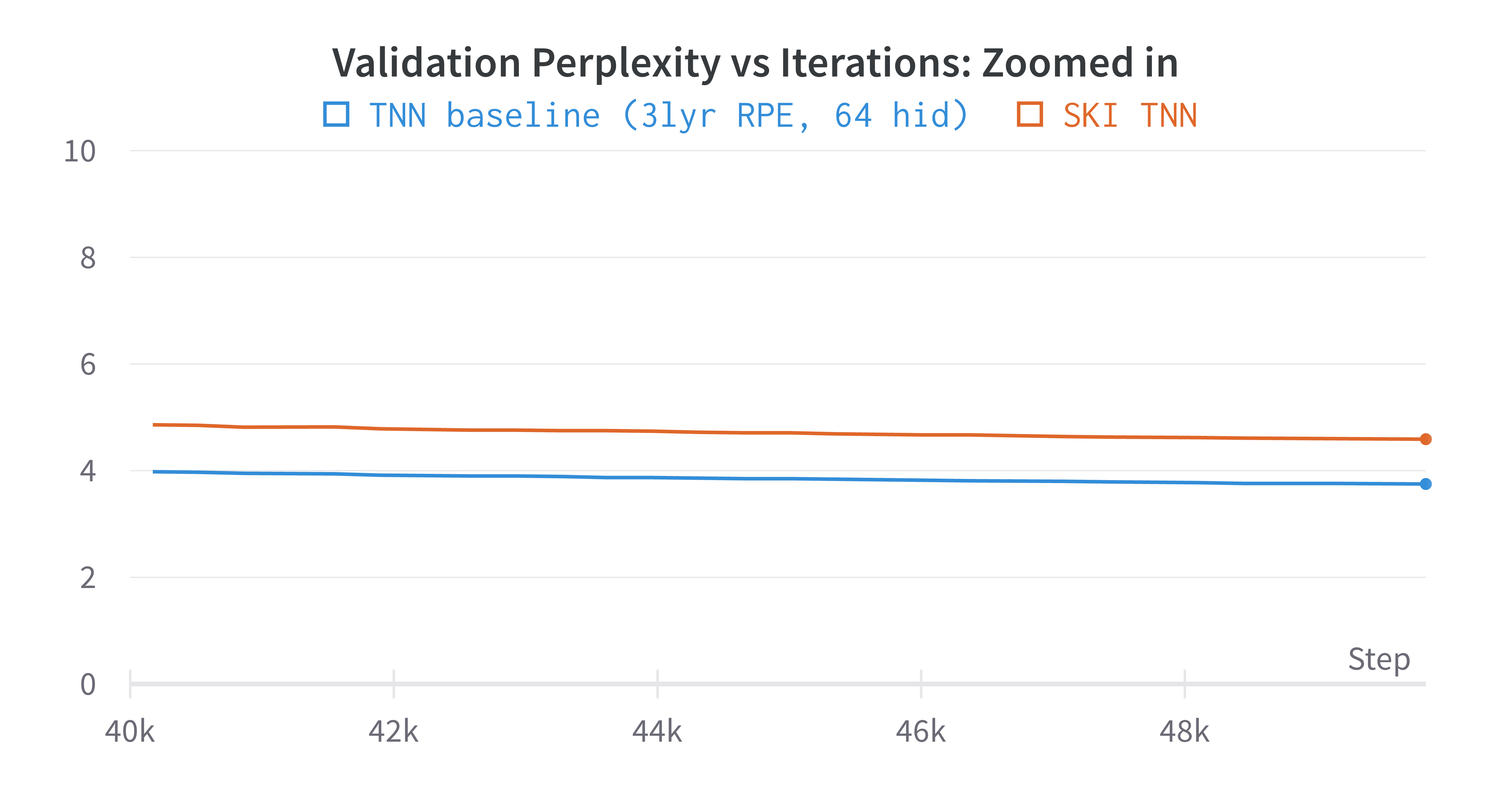}
        \caption{Bidirectional Pretraining, (close up).}
    \end{subfigure}
    \centering
    \caption{a) In Wikitext-103 bidirectional pretraining, using SKI leads to similar validation loss versus the baseline over 50k. b) We zoom in and see that the final validation perplexity is \textit{slightly} worse, which makes sense as SKI is approximate.}
    \label{fig:blm_valid_ppl_ski}
\end{figure}

\begin{figure}
    \begin{subfigure}[t]{0.45\linewidth}
        \centering
        \includegraphics[width=0.9\linewidth]{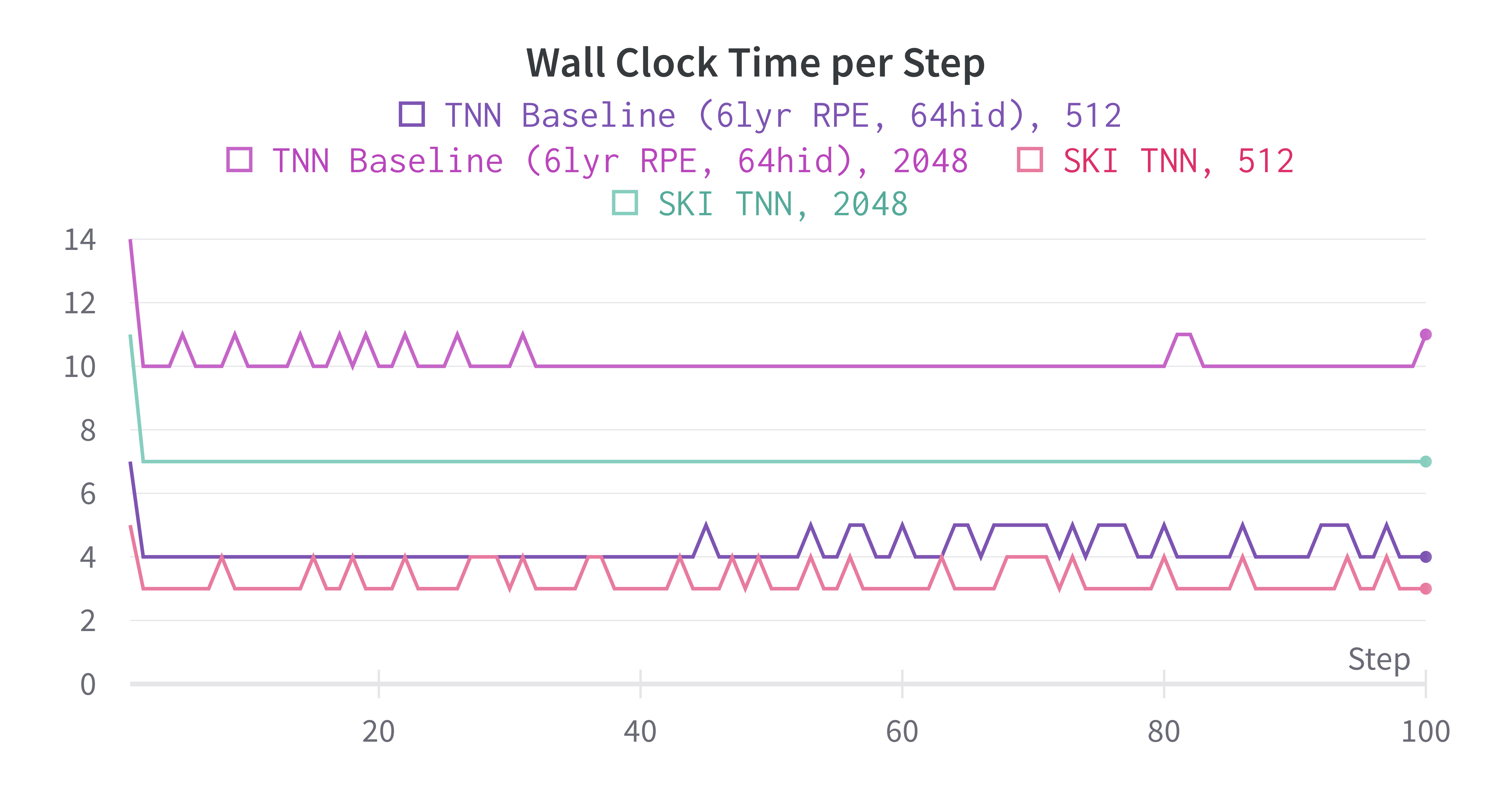}
        \caption{Wall clock time for SKI vs baseline.}
    \end{subfigure}
    \begin{subfigure}[t]{0.45\linewidth}
        \centering
        \includegraphics[width=0.9\linewidth]{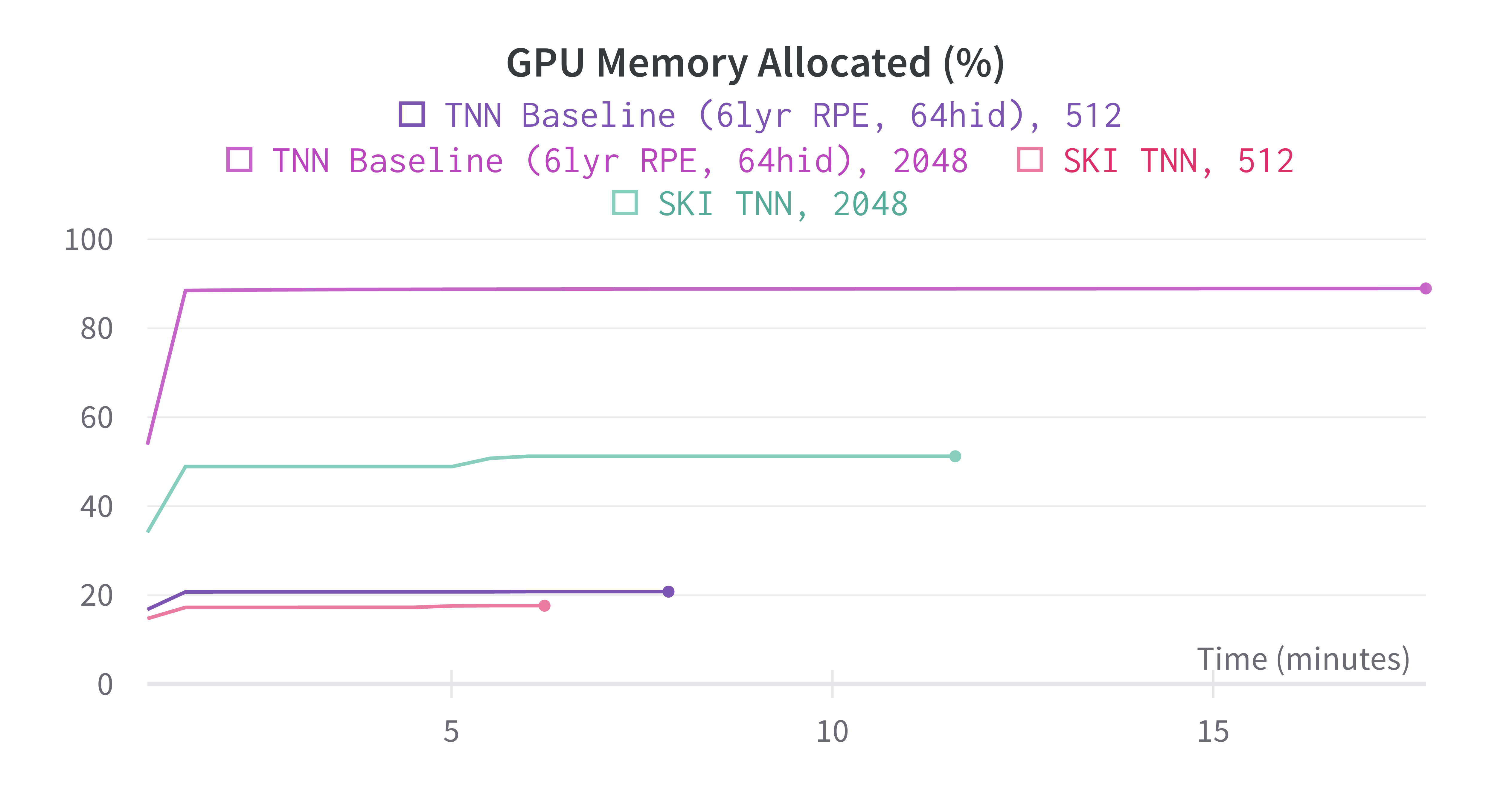}
        \caption{Memory usage for SKI vs baseline.}
    \end{subfigure}
    \centering
    \caption{a) Wall clock time per training step for SKI-TNN at sequence lengths 512 and 2048 vs the baseline with a 6 layer RPE from \cite{qin2023toeplitz}. At sequence length 512, our approach has a 25\% reduction in time per step in the flat regions. At 2048, our approach has a 30\% reduction in time per step. b) GPU memory allocation. At 512, our approach requires approximately 17\% less memory. At 2048, our approach requires approximately 42\% less memory.}
    \label{fig:blm_timing_memory_ski}
\end{figure}


\begin{figure}
    \begin{subfigure}[t]{0.45\linewidth}
        \centering
        \includegraphics[width=0.9\linewidth]{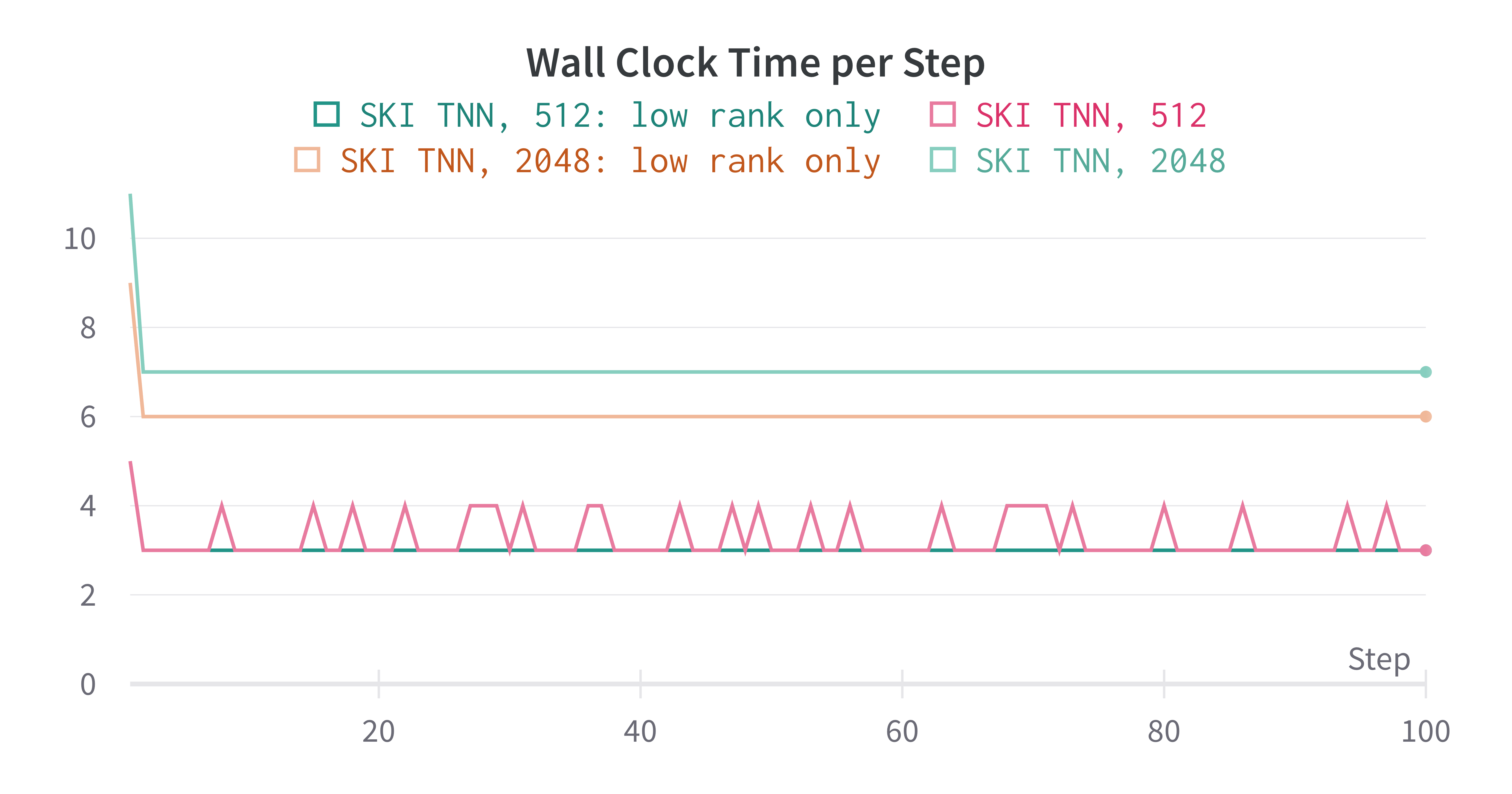}
        \caption{Wall clock time for SKI: low rank only vs sparse plus low rank.}
    \end{subfigure}
    \begin{subfigure}[t]{0.45\linewidth}
        \centering
        \includegraphics[width=0.9\linewidth]{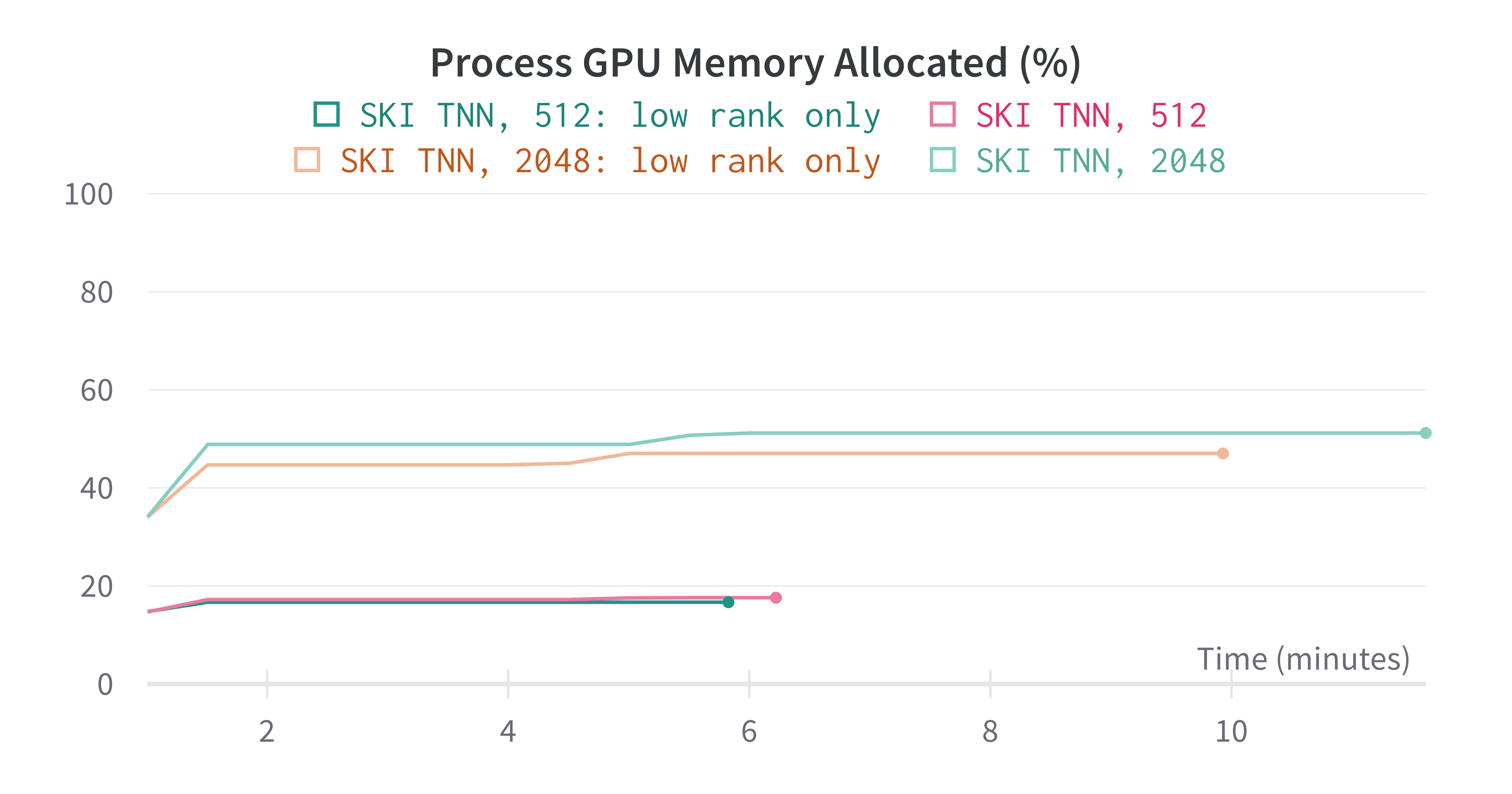}
        \caption{Memory usage for SKI:low rank only vs sparse plus low rank.}
    \end{subfigure}
    \centering
    \caption{a) Wall clock time per training step for SKI-TNN using either the low rank component only or both the sparse and low rank components. We see that the low rank component is the primary bottleneck, but the sparse component (the 1d conv) still adds a substantial amount of time. b) GPU memory allocation for low rank vs sparse plus low rank. Here, nearly all of the memory is due to the low rank component.}
    \label{fig:blm_timing_memory_ski_low_rank}
\end{figure}




\end{document}